\newtheorem{theorem}{Theorem}
\newtheorem{lemma}{Lemma}
\newtheorem{proposition}{Proposition}
\newtheorem{definition}{Definition}
\newtheorem{assumption}{Assumption}
\title{A Challenge in Reweighting Data \\ with Bilevel Optimization}
\author{Anastasia Ivanova$^1$\thanks{work done while at Apple} \and Pierre Ablin$^2$}
\date{
	$^1$ Univ. Grenoble Alpes \\
	$^2$ Apple
}
\begin{document}

\maketitle
\begin{abstract}
In many scenarios, one uses a large training set to train a model with the goal of performing well on a smaller testing set with a different distribution.
Learning a weight for each data point of the training set is an appealing solution, as it ideally allows one to automatically learn the importance of each training point for generalization on the testing set.
This task is usually formalized as a bilevel optimization problem.
Classical bilevel solvers are based on a warm-start strategy where both the parameters of the models and the data weights are learned at the same time.
We show that this joint dynamic may lead to sub-optimal solutions, for which the final data weights are very sparse.
This finding illustrates the difficulty of data reweighting and offers a clue as to why this method is rarely used in practice.
\end{abstract}

\section{Introduction}
In many practical learning scenarios, there is a discrepancy between the training and testing distribution.
For instance, when training large language models, we may have access to a training set that contains many low-quality data points from different sources and want to train a model on this dataset to perform well on a testing set that contains a few high-quality points~\cite{devlin2019bert,bommasani2021opportunities,mahajan2018exploring}.
An appealing way to solve this problem is \emph{data reweighting}~\cite{ren2018reweight,shu19metaweight,wang2020optimizing}, where one attributes one weight to each data point in the training set.
The weight of a training sample should reflect how much this sample resembles the testing set and helps the model perform well on it.
\autoref{fig:basic_concept} illustrates the general principle.

Learning the optimal weights can be cast as a bilevel optimization problem~\cite{franceschi2017forward}, where the optimal weights are such that training the model with these weights leads to the smallest test loss possible.
The weights are usually constrained to sum to one, leading to an optimization problem on the simplex, which is usually solved with mirror descent~\cite{nemirovskij1983problem}.
Despite its promise of automatically learning the importance of data points, data reweighting is still seldom used in practice.
In this paper, we try to provide a possible explanation for this lack of adoption, showing, in short, that the underlying optimization problem is hard.

In a large-scale setting where fitting the model once is expensive, it is prohibitively costly to iteratively update the weights with a model fit at each iteration~\cite{pedregosa2016hyperparameter}.
Hence, practitioners often resort to \emph{warm-started} bilevel optimization, where the parameters of the model and the weights evolve simultaneously~\cite{luketina2016scalable,ji2021bilevel,dagreou2022framework}.
\begin{figure}[t]
    \includegraphics[width=.99\columnwidth]{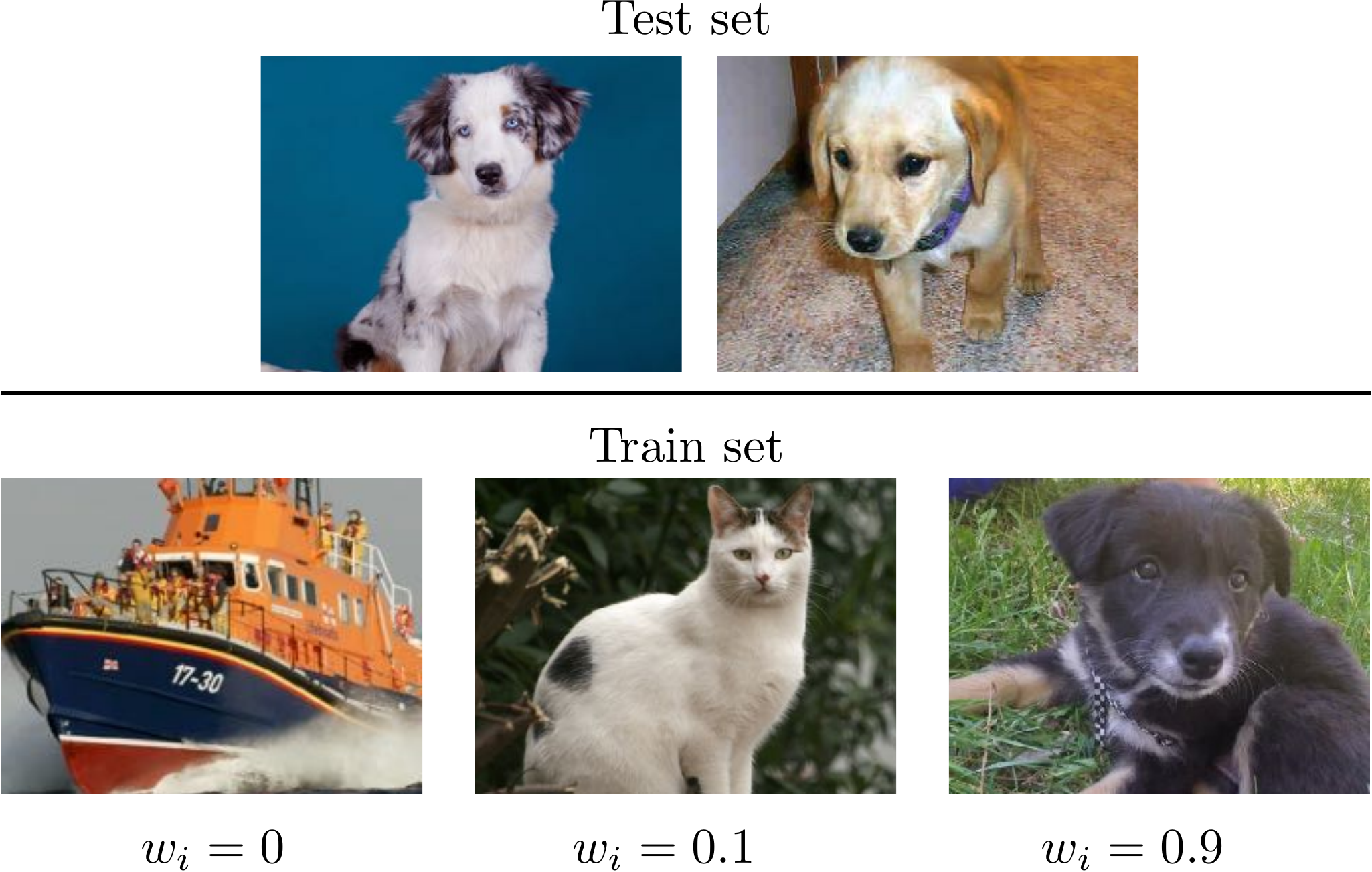}
    \caption{Data reweighting principle: test and train distributions are different. 
    We aim to estimate a weight for each training sample that reflects its contribution to the model's performance on the test set.
    For instance, if the test set only contains dog images, we want a large weight on dog images in the training set, a small weight on other animals which might help the model, and zero weight on irrelevant training images.
    These weights should be learned automatically during training.
    }
    \label{fig:basic_concept}
\end{figure}

This paper aims to provide a detailed analysis of the corresponding joint dynamics. 
Our main result indicates that warm starting with mirror descent leads to \emph{sparse weights}: after training, only a few training points have a non-zero weight, which is detrimental to the generalization power of the corresponding parameters.
Our results show a weakness of warm-started bilevel approaches for this problem and are a first step toward explaining the hardness of data-reweighting.

\textbf{Contributions and paper organization}~In \autoref{sec:problem}, we introduce \textbf{data reweighting as a bilevel problem} and explain how warm-started bilevel aims at solving the problem.
In \autoref{sec:dynamical_system}, we \textbf{study the corresponding dynamics}. 
We focus on two settings.
When the parameters are updated at a much \emph{greater} pace than the weights, we formalize the intuition that this recovers the standard, non-warm-started, bilevel approach, which leads to satisfying solutions but takes a long time to converge. 
In the opposite setting, where the parameters are updated at a much \emph{slower} pace than the weights, where \textbf{we show that this leads to extremely sparse weights}, which in turn hinders the generalization of the model since the model is effectively trained with few samples. 
Finally, \autoref{sec:expe} gives numerical results illustrating the theory presented in the paper.

\textbf{Notation:} The vector containing all $1$'s is $\mathbb{1}_k\in\mathbb{R}^k$.
The simplex is $\Delta_n = \{w\in\mathbb{R}^n_+|\sum_{i=1}^nw_i=1\}$.
The multiplication of two vectors $u, v\in\mathbb{R}^n$ is $u\odot v\in\mathbb{R}^n$ of entries $(u_i v_i)$.
The set $\{1, n\}$ is the set of integers $1$ to $n$.
The support $\mathrm{Supp}(w)$ of a vector $w\in\mathbb{R}^n$ is the set of indices $i$ in $\{1, n\}$ such that $w_i\neq 0$.
Given a set of indices $S=(s_1, \dots, s_l)\subset\{1, n\}$ of size $s$, the restriction to $S$ of a vector $w\in\mathbb{R}^n$ is the vector $w\mid_{S}\in\mathbb{R}^l$ of entries $w_{s_j}$ for $j\in\{1, l\}$.
The gradient (resp. Hessian) of a loss $\ell(\theta; x)$ is $\nabla_\theta \ell(\theta; x)$ (resp. $\nabla^2_{\theta\theta}\ell(\theta; x)$).
The $\mathrm{range}$ of a matrix is the span of its column.
\section{Data reweighting as a bilevel problem}
\label{sec:problem}
We consider a train dataset $X = [x_1, \dots, x_n]$, %
and a testing dataset $X' = [x'_1, \dots, x'_m]$. %
Our goal is to train a machine learning model on the train set that has good performance on the testing set.
Letting $\theta\in\mathbb{R}^p$ the parameters of the model, we let $\ell(\theta;x)$ the loss corresponding to the train set, and $\ell'(\theta; x')$ the loss of the test set. 
The classical \emph{empirical risk minimization} cost function is $G(\theta) = \frac1n\sum_{i=1}^n\ell(\theta;x_i)$ for the train set and $F(\theta) :=\frac1m\sum_{j=1}^m\ell'(\theta;x_j')$ for the test set, where each sample has the same weight.

The goal of data reweighting is to give a different weight from $1/n$ to each training data in order to get a solution $\theta$ that leads to a good performance on the test set, i.e., leads to a small $F(\theta)$.
To this end, we introduce 
\begin{equation}
    \label{eq:inner_loss}
G(\theta, w) := \sum_{i=1}^nw_i\ell(\theta;x_i)
\end{equation}
defined for $w$ belonging to the \emph{simplex} $\Delta_n = \{w\in \mathbb{R}_+^n|\enspace \sum_{i=1}^nw_i=1\}$.
Ideally, we would want $w_i$ to be large when the training sample $x_i$ helps the model's performance on the testing set, and conversely, $w_i$ should be small when $x_i$ does not help the model on the testing set.
We make the following blanket assumption which makes the bilevel problem well-defined:
\begin{assumption}[Strong convexity]
\label{ass:strong_convexity}
    The loss functions $\ell, \ell'$ are differentiable. Additionally, the function $\theta\mapsto \ell(\theta; x)$ is $\mu-$ strongly convex with $\mu >0$ for all $x$, i.e. for all $x, \theta$, we have $\lambda_{\min}(\nabla^2_{\theta\theta}\ell(\theta; x)) \geq \mu$.
\end{assumption}
For instance, such assumption is verified if $\ell(\theta; x)$ is of the form $\mathrm{fit}(\theta; x) + \mu \|\theta\|^2 / 2 $, where the $\mathrm{fit}$ function is a data-fit term that is convex (like a least-squares or logistic loss) and $\mu \|\theta\|^2 / 2$ is a regularizer.

Changing the weights $w$ modifies the cost function $G(\theta, w)$, hence its minimizer is now a function of $w$, denoted $\theta^*(w) = \arg\min_{\theta} G(\theta, w)$. 
Note that the strong-convexity assumption~\ref{ass:strong_convexity} implies that the function $G$ itself is $\mu-$strongly convex, guaranteeing the existence and uniqueness of $\theta^*(w)$.
Data reweighting is formalized as the \emph{bilevel problem} \begin{equation}
\label{eq:bilevel_problem}
    \min_{w\in \Delta_n} h(w):= F(\theta^*(w)),
\end{equation}
where $\theta^*$ depends implicitly on $w$ through $G$.
\subsection{Importance sampling as the hope behind data reweighting?}
One can take a distributional point of view on the problem to get better insights into the sought-after solutions.
We let $\mu = \sum_{i=1}^n \delta_{x_i}$ the empirical training distribution, and $\nu = \sum_{j=1}^m \delta_{x'_j}$ the empirical test distribution.
Instead of having one weight per data sample, we now have a weighting function $\omega:\mathbb{R}^d\to \mathbb{R}_+$ such that $\int\omega(x)d\mu(x) = 1$. The bilevel problem becomes
\begin{align}
    \begin{split}
    \label{eq:measure_bilevel}
        \min_{\omega}\int\ell'(\theta^*(\omega); x)d\nu(x)\text{ subject to} \\
        \theta^*(\omega)\in \arg\min_\theta \int\ell(\theta; x)\omega(x)d\mu(x)    
    \end{split}
\end{align}
In practice, the distributions $\mu$ and $\nu$ are sums of Diracs; we can wonder what happens instead if they are continuous.
\begin{proposition}
    \label{prop:measure_prop}
    If $\nu$ is absolutely continuous w.r.t. $\mu$, and $\ell = \ell'$, a global solution to the bilevel problem~\eqref{eq:measure_bilevel} is $\omega^*(x) = \frac{d\nu}{d\mu}(x)$.
\end{proposition}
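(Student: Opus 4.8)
The plan is to exploit the fact that, under the hypothesis $\ell = \ell'$, the outer objective is simply $F(\theta) = \int \ell(\theta; x)\, d\nu(x)$, and to show that the proposed weighting $\omega^* = d\nu/d\mu$ makes the inner problem reproduce this exact objective. First I would check admissibility of $\omega^*$: the absolute continuity $\nu \ll \mu$ guarantees that the Radon--Nikodym derivative exists, it is nonnegative since both measures are nonnegative, and it satisfies the mass constraint because $\int \omega^*\, d\mu = \int d\nu = 1$ once $\nu$ is treated as a probability measure.

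The key step is a single change-of-measure identity. By the defining property of the Radon--Nikodym derivative, for every $\theta$ we have
\begin{equation}
\int \ell(\theta; x)\,\omega^*(x)\, d\mu(x) = \int \ell(\theta; x)\, \frac{d\nu}{d\mu}(x)\, d\mu(x) = \int \ell(\theta; x)\, d\nu(x) = F(\theta).
\end{equation}
Thus the inner objective evaluated at $\omega^*$ coincides, as a function of $\theta$, with the outer objective $F$. Since each $\ell(\cdot; x)$ is strongly convex by Assumption~\ref{ass:strong_convexity} and $\omega^* \geq 0$, the inner objective at $\omega^*$ is itself strongly convex, so its unique minimizer $\theta^*(\omega^*)$ is exactly the global minimizer of $F$.

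It then remains to compare against an arbitrary competitor. For any admissible weighting $\omega$, the outer value is $F(\theta^*(\omega))$, which is bounded below by $\min_\theta F(\theta)$ simply because $\theta^*(\omega)$ is one particular point in parameter space. The previous step shows that at $\omega^*$ this lower bound is attained, $F(\theta^*(\omega^*)) = \min_\theta F(\theta)$. Hence $\omega^*$ achieves the smallest possible value of the outer objective over all admissible weightings, which is precisely the definition of a global solution.

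I do not anticipate a genuine analytic obstacle: the argument reduces to the one-line change of measure above followed by the trivial bound $F(\theta^*(\omega)) \geq \min_\theta F$. The only points deserving care are bookkeeping rather than difficulty, namely confirming that $\omega^*$ is admissible (existence of the density, nonnegativity, unit mass, which is where $\nu \ll \mu$ and the normalization of $\nu$ enter) and noting that strong convexity is what makes $\theta^*(\omega)$ well-defined and unique, so that the outer objective is unambiguous. It is worth flagging that global optimality here does not entail uniqueness of the optimal weighting: any $\omega$ whose induced inner minimizer equals $\arg\min_\theta F$ would be equally optimal.
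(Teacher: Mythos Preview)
Your proposal is correct and follows essentially the same approach as the paper: both use the Radon--Nikodym change-of-measure identity to make the inner objective at $\omega^*$ coincide with the outer objective $F$, then compare against any other $\omega$ via the trivial bound $F(\theta^*(\omega))\geq \min_\theta F(\theta)$. Your version is simply a bit more explicit about admissibility and the role of strong convexity.
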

This ratio $d\nu/d\mu$ is precisely the one that \emph{importance sampling} techniques~\cite{tokdar2010importance} try to estimate: in this specific case, bilevel optimization recovers importance sampling.
We now turn to the resolution of the bilevel problem.
\subsection{Solving the bilevel problem}
The bilevel optimization problem corresponding to data-reweighting is the \emph{single-level} optimization of the non-convex function $h$ over the simplex $\Delta_n$~\eqref{eq:bilevel_problem}, for which \emph{mirror descent}~\cite{nemirovskij1983problem,beck2003mirror} is an algorithm of choice.
Starting from an initial guess $w^0\in \Delta_n$, mirror descent iterates $\tilde{w}^{k+1} = w^k \odot \exp(-\eta \nabla h(w^k))$ and $w^{k+1} = \frac{\tilde{w}^{k+1}}{\|\tilde{w}^{k+1}\|_1}$, where $\odot$ is the element-wise multiplication.

This method involves the gradient of the value function $h$, which is obtained with the chain rule~\cite{samuel2009learning,domke2012generic}
\begin{equation}
    \label{eq:chain_rule}
\nabla h(w) = \left[\frac{\partial\theta^*}{\partial w}\right]^T \nabla F(\theta^*(w)).
\end{equation}
The implicit function theorem then gives, thanks to the invertibility of $\nabla^2_{\theta\theta}$ guaranteed by \autoref{ass:strong_convexity}:
\begin{equation}
    \label{eq:ift}
    \frac{\partial\theta^*}{\partial w} = - \left[\nabla_{\theta\theta}^2G(\theta^*(w), w)\right]^{-1}\nabla_{\theta w}^2G(\theta^*(w), w).
\end{equation}
In the data reweighting problem, $G$ has a special structure, which gives
$\nabla_{\theta\theta}^2G(\theta, w)=\sum_{i=1}^nw_i\nabla^2\ell(\theta;x_i)$ and $\nabla_{\theta w}^2G(\theta, w) = [\nabla \ell(\theta, x_1), \dots, \nabla \ell(\theta, x_n)]\in\mathbb{R}^{p\times n}$.
We finally obtain $\nabla h(w) = \Psi(\theta^*(w), w)\in\mathbb{R}^n$ where the $i-$th coordinate of $\Psi$ is given by 
\begin{equation}
    \label{eq:hyper_gradient}
\Psi(\theta, w)_i = -\left\langle \nabla \ell(\theta;x_i), \left[\nabla_{\theta }^2G(\theta, w)\right]^{-1}\nabla F(\theta)\right\rangle.
\end{equation}
This hyper-gradient has an intuitive structure: letting $\langle u, v\rangle_{\theta}$ the scalar product defined over $\mathbb{R}^p$ by $\langle u, v\rangle_{\theta} = \langle u, \left[\nabla_{\theta \theta}^2G(\theta, w)\right]^{-1}v\rangle$, the hyper-gradient corresponding to sample $i$ is simply the (opposite) of the alignment measured with this new scalar product between the gradient of the sample $i$, and the gradient of the outer function $F$.
Therefore, this gradient increases weights for which $\nabla \ell(\theta; x_i)$ aligns with the outer gradient.
\begin{figure}[t!]
  \begin{algorithm}[H]
  \caption{Exact Bilevel Algorithm}
  \label{algo:mirrordescent}
    \begin{algorithmic}
       \STATE {\bfseries Input :} Initial point $w^0\in \Delta$, step-size $\eta$, number of iterations $N$.
       \FOR{$k=1$ {\bfseries to} $N$}
       \STATE Compute $\theta^*(w^k)$
       \STATE Compute $\nabla h(w^k) = \Psi(\theta^*(w^k), w^k)$
       \STATE Update $w^{k+1} = \dfrac{w^k \odot \exp(-\eta\nabla h(w^k))}{\|w^k \odot \exp(-\eta \nabla h(w^k))\|_1}$
        \ENDFOR
         \STATE \textbf{Return : }$w^N$.
        \end{algorithmic}
  \end{algorithm}
  \vspace{-2em}
\end{figure}
The mirror descent algorithm to solve the bilevel problem~\eqref{eq:bilevel_problem} is described in \autoref{algo:mirrordescent}.

Unfortunately, this algorithm is purely theoretical as it is not implementable; indeed, computing $\theta^*(w^k)$ is equivalent to finding the minimum of $G$, which is generally impossible. 
We can instead try to approximate it by replacing $\theta^*(w^k)$ by the output of many iterations of an optimization algorithm, but then the cost of the algorithm becomes prohibitive: each iteration requires the approximate resolution of an optimization problem.
\subsection{The practical solution: warm-started bilevel optimization}
A sound idea for a scalable algorithm is to use an iterative method instead to minimize $G$ and have $\theta$ and $w$ evolve simultaneously.
In this case, we have two sets of variables $\theta^k, w^k$.
The parameters $\theta^k$ are updated using standard algorithms like (stochastic) gradient descent, and then to update $w^k$, we approximate the hyper-gradient by $\nabla h(w^k)\simeq \Psi(\theta^k, w^k)$ in Eq.~\eqref{eq:hyper_gradient}.
For simplicity, we use gradient descent with step-size $\rho$ to update $\theta^k$i, and each iteration consists of one update of $\theta^k$ followed by one update of $w^k$.
The full procedure is described in \autoref{algo:scalablemirrordescent}.
\begin{figure}[t]
  \begin{algorithm}[H]
  \caption{Warm Started Bilevel Algorithm}
  \label{algo:scalablemirrordescent}
    \begin{algorithmic}
       \STATE {\bfseries Input :} Initial points $\theta^0\in\mathbb{R}^p$ and $w^0\in \Delta_n$, step-sizes $\eta$ and $\rho$, number of iterations $N$.
       \FOR{$k=1$ {\bfseries to} $N$}
       \STATE Compute $\Psi(\theta^k, w^k)$
       \STATE Update $\theta^{k+1} = \theta^k - \rho \nabla G(\theta^k, w^k)$
       \STATE Update $w^{k+1} = \dfrac{w^k \odot \exp(-\eta \Psi(\theta^k, w^k) )}{\|w^k \odot \exp(-\eta \Psi(\theta^k, w^k))\|_1}$
        \ENDFOR
         \STATE \textbf{Return : }$w^N$.
        \end{algorithmic}
  \end{algorithm}
  \vspace{-2em}
\end{figure}
As a side note, this algorithm can still be expensive to implement because i) computing $\Psi$ requires solving a linear system and ii) in a large-scale setting when $n$ and $m$ are large, computing the inner and outer gradients and the Hessian scales linearly with the number of samples.
Several works try to fix these issues by proposing bilevel algorithms that do not have to form a Hessian or invert a system and that are stochastic, i.e., only use one sample at each iteration to progress~\cite{ghadimi2018approximation,yang2021provably,grazzi2021convergence,li2022fully}.
For our theoretical analysis, we do not consider such modifications and focus on the bare-bones case of \autoref{algo:scalablemirrordescent}, which slightly departs from practice, but is already insightful.
\begin{figure*}[t!]
    \centering
 \includegraphics[width=.99\textwidth]{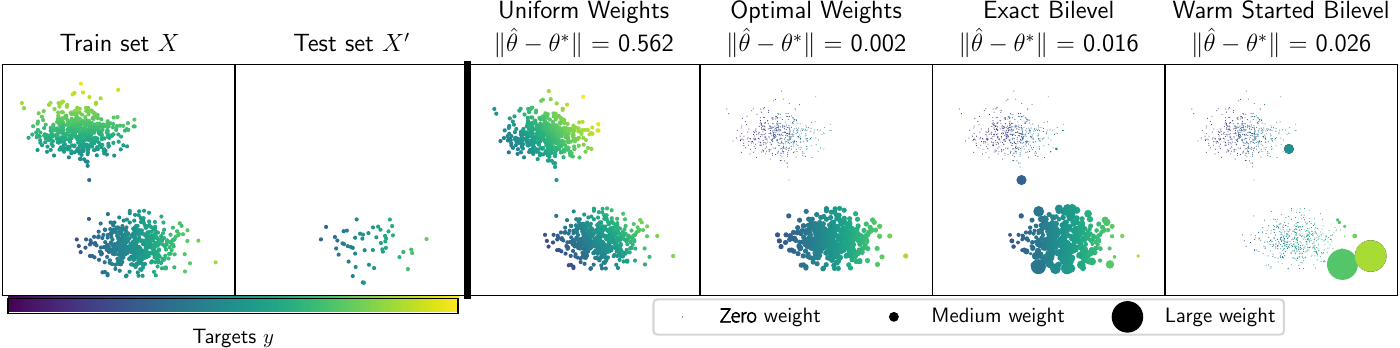}
     \caption{\textbf{Data reweighting for a linear regression problem.} The first two figures show the problem setup. The train set consists of points from two clusters. Points color corresponds to the value of target $y = \langle \hat{\theta}_i, d\rangle +\mathrm{noise}$, where each cluster has its own linear model $\hat{\theta}_1$ and $\hat{\theta}_2$. 
     The test set consists of points only from the first cluster with $\hat{\theta}=\hat{\theta}_1$.
     The last four figures show four reweighting solutions.
     The size of the points in the picture corresponds to the weight value $w_i$ for the corresponding sample. 
     We give a small non-zero radius to zero weights for visualization purposes.
     \textbf{Uniform weights:} same weights for all data points regardless of the cluster. Since the full dataset is not separable with a linear model, the linear model finds a compromise that leads to a poor solution. \textbf{Optimal weights:} equal weights for points from the first cluster, zero weights for the other points.
     The wrong cluster is fully discarded, leading to optimal estimation of the parameters.
     \textbf{Exact bilevel:}  output of \autoref{algo:mirrordescent}, gets to a solution close to the optimal weights, and the cluster is correctly identified with almost uniform weights.
     \textbf{Warm started bilevel:} output of \autoref{algo:scalablemirrordescent}, gets to a very sparse solution which does not generalize well.}
     \label{fig:illustration} 
 \end{figure*}
\subsection{A toy experiment}
\label{subsec:toy_expe}
We describe a toy experiment illustrating the practical issues of \autoref{algo:scalablemirrordescent}.
We consider a 2-d regression setting, where the train data $[x_1, \dots, x_n]$ comes from a mixture of two Gaussian, and each cluster has a different regression parameter.
Formally, given two centroids $\mu_1, \mu_2\in\mathbb{R}^2$ and parameters $\hat{\theta}_1, \hat{\theta}_2\in\mathbb{R}^2$, each training sample $x_i = [d_i, y_i]$ is generated by
\begin{align*}
z_i\sim \mathrm{Bern}(1, 2), \enspace d_i \sim \mathcal{N}(\mu_{z_i}, I), y_i\sim \mathcal{N}(\langle d_i, \theta^*_{z_i}\rangle , \sigma^2),
\end{align*}
where $\mathrm{Bern}(1, 2)$ is the Bernoulli law with probability $1/2$ over $\{1, 2\}$.
Here, $z_i$ represents the random cluster associated to $x_i$.
Meanwhile, the test data $x_j'$ is only drawn from the first cluster with parameter $\hat{\theta} = \hat{\theta}_1$.
The corresponding points $d$ are plotted in the two leftmost figures in \autoref{fig:illustration} in a color reflecting the target's value $y$.
The model is linear with a least-squares loss, meaning that the train and test loss are
\begin{equation}
    \ell(x, \theta) = \ell'(x, \theta) =  \frac12\|\langle d, \theta\rangle - y\|^2\text{ with }x=[d, y].
\end{equation}
The four rightmost figures in \autoref{fig:illustration} display four possible reweighting solutions.
Taking uniform weights $w_i = 1/n$, the train loss is minimized by finding a compromise between the two cluster's parameters, and the model does not recover $\hat{\theta}_1$ nor $\hat{\theta}_2$. 
This leads to a high training and testing loss, and $\|\hat{\theta} - \theta^*\|$ is large.
Intuitively, in the data reweighting framework, we want the weights to focus only on the first cluster and to discard points from the second cluster, which is irrelevant to the test problem.
Additionally, in a noiseless setting ($\sigma=0$), setting $w_i$ proportional to $\delta_{z_i=1}$ leads to a global minimum in the bilevel problem since $\theta^*(w) = \hat{\theta}$ and hence $h(w) = F(\hat{\theta}) = 0$.
In this ideal scenario, the model is fit only on the correct cluster, leading to a small error $\|\hat{\theta} - \theta^*\|$, due only to the label noise.

We then apply \autoref{algo:mirrordescent} to this problem.
Here, since the inner problem is quadratic, we have a closed-form $\theta^*(w) = \left(\sum_{i=1}^nw_id_id_i^T\right)^{-1}\sum_{i=1}^nw_iy_id_i$, making \autoref{algo:mirrordescent} implementable.
It gives a reasonable solution where most of the weights in the second cluster are close to $0$, and weights in the first cluster are nearly uniform.
Thus, the error $\|\hat{\theta} - \theta^*\|$ is small.

We finish with the warm-started \autoref{algo:scalablemirrordescent}.
We observe that the algorithm outputs sparse weights: only a few weights are non-zero.
This, in turn, leads to a higher estimation error $\|\hat{\theta} - \theta^*\|$.
As an important note, this effect is worsened when taking a higher learning rate $\eta$ for the update of $w$, and is mitigated by taking small learning rates $\rho$ that lead to slow convergence to better solutions.
This illustrates a problem with the warm-started method: it has a trajectory distinct from the exact bilevel method and recovers sub-optimal solutions.
In the following section, we develop a mathematical framework that explains this phenomenon.
\section{A Dynamical System View on the Warm-Started Bilevel Algorithm}
\label{sec:dynamical_system}
The study of iterative methods like \autoref{algo:scalablemirrordescent} is notoriously hard; we focus on the  dynamical system obtained by letting the step sizes $\eta$, $\rho$ go to $0$ at the same speed.
\autoref{algo:scalablemirrordescent} can be seen as the discretization of the Ordinary Differential Equation (ODE)
\begin{equation}
    \label{eq:bilevel_flow}
\begin{cases}
\dot{\theta} & = - \alpha \nabla G(\theta, w)\\
\dot w  & = - \beta P(w)\Psi(\theta, w)
\end{cases}\enspace,
\end{equation}
where $\alpha>0$ controls the speed of convergence of $\theta$, $\beta>0$ controls the speed of $w$, and $P(w) = \mathrm{diag}(w) - ww^T$ is a preconditioning matrix, which is the inverse Hessian metric $\mathrm{diag}(w)$ associated to the entropy applied to the projector on the tangent space $I_n - \mathbb{1}_nw^T$, recovering a Riemannian gradient flow~\cite{gunasekar2018characterizing}.
This ODE can be recovered from \autoref{algo:scalablemirrordescent} in the following sense:
\begin{proposition}
    \label{prop:discretization_ode}
    Let $(\theta(t), w(t))$ the solution of the ODE~\eqref{eq:bilevel_flow}, and $(\theta^k, w^k)$ the iterates of \autoref{algo:scalablemirrordescent}.
    Assume that the steps are such that $(\rho, \eta) = \tau(\alpha, \beta)$ with $\tau >0$.
    Then, for any $T >0$, we have $\lim_{\tau \to 0}\sup_{t\in[0, T]}\|(\theta(t), w(t)) - (\theta^{\lfloor t /\tau \rfloor}, w^{\lfloor t /\tau \rfloor})\| = 0$.
\end{proposition}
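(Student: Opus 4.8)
The plan is to treat this as a standard consistency-plus-stability argument for a one-step discretization of an ODE. Writing $V(\theta,w) = \big(-\alpha\nabla G(\theta,w),\, -\beta P(w)\Psi(\theta,w)\big)$ for the vector field in \eqref{eq:bilevel_flow} and $\Phi_\tau$ for the map sending $(\theta^k,w^k)$ to $(\theta^{k+1},w^{k+1})$ in \autoref{algo:scalablemirrordescent}, it suffices to establish two things: (i) $\Phi_\tau$ is a first-order consistent discretization, i.e. $\Phi_\tau(\theta,w) = (\theta,w) + \tau V(\theta,w) + O(\tau^2)$ uniformly on a suitable compact set, and (ii) $V$ is Lipschitz on that set. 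A discrete Gronwall comparison between the iterates and the exact solution sampled at times $t_k = k\tau$ then yields a global error of order $\tau$ on $[0,T]$, and continuity of the flow upgrades this to the uniform-in-$t$ statement.

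The heart of the argument is the consistency computation (i). The $\theta$-update $\theta^{k+1} = \theta^k - \rho\nabla G(\theta^k,w^k)$ with $\rho = \tau\alpha$ is exactly a forward-Euler step for $\dot\theta = -\alpha\nabla G$, so no expansion is needed there. For the $w$-update, I would set $g = \Psi(\theta^k,w^k)$ and $\eta = \tau\beta$, and Taylor-expand the exponentiated-gradient step: $w_i^k\exp(-\eta g_i) = w_i^k(1 - \eta g_i) + O(\eta^2)$, so the normalizer is $\sum_i w_i^k\exp(-\eta g_i) = 1 - \eta\langle w^k,g\rangle + O(\eta^2)$ using $\sum_i w_i^k = 1$. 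Dividing and expanding $1/(1-\eta\langle w^k,g\rangle)$ gives $w_i^{k+1} = w_i^k - \eta\, w_i^k\big(g_i - \langle w^k, g\rangle\big) + O(\eta^2)$. The key observation is the identity $[P(w)g]_i = w_i g_i - w_i\langle w,g\rangle = w_i(g_i - \langle w,g\rangle)$, which shows that the first-order term is exactly $-\eta P(w^k)g = -\tau\beta P(w^k)\Psi(\theta^k,w^k)$, i.e. the forward-Euler step for $\dot w = -\beta P(w)\Psi$. Hence $\Phi_\tau = \mathrm{Id} + \tau V + O(\tau^2)$, as required.

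For the regularity and confinement needed in (ii), I would exploit \autoref{ass:strong_convexity}. Since each $\nabla^2\ell(\theta;x_i) \succeq \mu I$ and $w\in\Delta_n$ has nonnegative entries summing to one, $\nabla^2_{\theta\theta}G(\theta,w) = \sum_i w_i\nabla^2\ell(\theta;x_i) \succeq \mu I$ is uniformly invertible over the whole simplex, so $\Psi$ in \eqref{eq:hyper_gradient} is well-defined and as smooth as the losses allow. The simplex is compact and forward-invariant for the $w$-flow because $\mathbb{1}_n^T P(w) = w^T - (\mathbb{1}_n^T w)w^T = 0$ keeps $\sum_i w_i = 1$, while $[P(w)\Psi]_i = 0$ whenever $w_i = 0$ keeps the faces invariant; likewise the normalized iterates stay in $\Delta_n$ by construction. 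The only potentially unbounded variable is $\theta$, but the $\theta$-dynamics is a gradient flow of the $\mu$-strongly convex $G(\cdot,w)$ and is therefore attracted toward $\theta^*(w)$, which ranges over the bounded set $\theta^*(\Delta_n)$; this confines the trajectory to a compact set $K$. On $K$ the field $V$ is Lipschitz and $\Psi$ is bounded, making the remainder in (i) uniformly $O(\tau^2)$.

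I expect the main obstacle to be this confinement step rather than the algebra: one must argue that both the continuous solution and the discrete iterates remain in a common compact set on $[0,T]$, so that a single Lipschitz constant and a single $O(\tau^2)$ consistency bound apply throughout. Given that, the conclusion follows from the standard estimate $e^{k+1} \le (1+L\tau)e^k + C\tau^2$ for the error $e^k = \|(\theta^k,w^k) - (\theta(t_k),w(t_k))\|$, which yields $\sup_k e^k = O(\tau)$; interpolating with the modulus of continuity of the flow then gives $\sup_{t\in[0,T]}\|(\theta(t),w(t)) - (\theta^{\lfloor t/\tau\rfloor}, w^{\lfloor t/\tau\rfloor})\| \to 0$ as $\tau\to 0$.
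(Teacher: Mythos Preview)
Your proposal is correct and follows essentially the same approach as the paper: both arguments reduce to showing that one step of \autoref{algo:scalablemirrordescent} is a first-order consistent discretization of the field $V$ (the paper writes the increment function $\mathcal{F}(\theta,w,\tau)$ and computes $\lim_{\tau\to 0}\mathcal{F}=V$, whereas you Taylor-expand the exponentiated-gradient step to exhibit the same limit quantitatively), and then invoke standard ODE discretization theory. Your write-up is in fact more detailed than the paper's, which simply cites \cite{chicone2006ordinary} after the consistency computation; the confinement issue you flag is handled in the paper only later, via \autoref{app:lemma:bounded}.
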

When $\alpha \gg \beta$, the dynamics in $\theta$ are much faster, and we recover the classical bilevel approach:
\begin{theorem}
    \label{prop:recover_bilevel}
    Let $w^*(t)$ the solution of the ODE $\dot w = -P(w)\nabla h(w)$, and $\theta^{\alpha, \beta}, w^{\alpha, \beta}$ the solution of \eqref{eq:bilevel_flow}. Then for all $\alpha$, for all time horizon $T$, we have  
    $$\lim_{\beta\to 0}\sup_{t\in[0, T]}\|w^{\alpha, \beta}(t / \beta) - w^*(t)\|=0$$
    $$\lim_{\beta\to 0}\|\theta^{\alpha, \beta}(t) - \theta^*(w(t ))\|\leq \|\theta_0 - \theta^*(w_0)\|e^{-\mu \alpha t} $$
\end{theorem}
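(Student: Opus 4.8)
The plan is to treat \eqref{eq:bilevel_flow} as a two-timescale (singularly perturbed) system in which $\theta$ is the fast variable and $w$ the slow one, and to establish both claims by a Tikhonov-type argument resting on two Grönwall estimates. Before anything, I would fix a compact invariant region. Since $w(t)\in\Delta_n$ always lies on the compact simplex, and since \autoref{ass:strong_convexity} forces $\nabla^2_{\theta\theta}G(\theta,w)=\sum_i w_i\nabla^2\ell(\theta;x_i)\succeq\mu I$ for every $w\in\Delta_n$, the field $\Psi$ of \eqref{eq:hyper_gradient} and the solution map $\theta^*$ are well defined and smooth on the whole simplex (the Hessian is invertible even on its boundary). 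The Lyapunov estimate below shows $\|\theta(t)-\theta^*(w(t))\|$ stays bounded, so $\theta(t)$ remains in a compact set, and there I record uniform constants: $\theta^*$ is $L_*$-Lipschitz in $w$ (from \eqref{eq:ift} and $\|[\nabla^2_{\theta\theta}G]^{-1}\|\le 1/\mu$), $\Psi$ is $L_\Psi$-Lipschitz in $\theta$, the field $w\mapsto P(w)\nabla h(w)$ is $L_h$-Lipschitz, and $\|P(w)\Psi(\theta,w)\|\le M$.

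For the second estimate I work in the original time and set $E(t)=\tfrac12\|\theta(t)-\theta^*(w(t))\|^2$. Differentiating, and using $\nabla G(\theta^*(w),w)=0$ together with $\mu$-strong convexity of $G$, gives $\langle\theta-\theta^*(w),-\alpha\nabla G(\theta,w)\rangle\le-2\alpha\mu E$, while the chain-rule term $-\langle\theta-\theta^*(w),\tfrac{\partial\theta^*}{\partial w}\dot w\rangle$ is bounded in absolute value by $L_*M\beta\,\|\theta-\theta^*(w)\|$ because $\|\dot w\|\le\beta M$. Writing $r=\|\theta-\theta^*(w)\|$, these combine into the scalar inequality $\dot r\le-\alpha\mu\,r+L_*M\beta$. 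A linear comparison yields $r(t)\le r(0)e^{-\alpha\mu t}+\tfrac{L_*M\beta}{\alpha\mu}$, and letting $\beta\to0$ with $r(0)=\|\theta_0-\theta^*(w_0)\|$ independent of $\beta$ gives exactly $\lim_{\beta\to0}\|\theta^{\alpha,\beta}(t)-\theta^*(w(t))\|\le\|\theta_0-\theta^*(w_0)\|e^{-\mu\alpha t}$.

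For the first estimate I rescale time by $s=\beta t$, setting $\tilde w(s)=w^{\alpha,\beta}(s/\beta)$ and $\tilde\theta(s)=\theta^{\alpha,\beta}(s/\beta)$, so that $\tfrac{d\tilde w}{ds}=-P(\tilde w)\Psi(\tilde\theta,\tilde w)$ and $\tfrac{d\tilde\theta}{ds}=-\tfrac{\alpha}{\beta}\nabla G(\tilde\theta,\tilde w)$. The key algebraic fact is that on the slow manifold the approximate hyper-gradient equals the true one: comparing \eqref{eq:hyper_gradient} with \eqref{eq:chain_rule}--\eqref{eq:ift} gives $\Psi(\theta^*(w),w)=\nabla h(w)$, so the formal limit of the slow equation is precisely $\dot w^*=-P(w^*)\nabla h(w^*)$. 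Subtracting this from the $\tilde w$ equation, adding and subtracting $P(\tilde w)\Psi(\theta^*(\tilde w),\tilde w)=P(\tilde w)\nabla h(\tilde w)$, and using the Lipschitz constants gives $\tfrac{d}{ds}\|\tilde w-w^*\|\le L_h\|\tilde w-w^*\|+C\,\|\tilde\theta-\theta^*(\tilde w)\|$; since $\tilde w(0)=w_0=w^*(0)$, Grönwall yields $\sup_{s\in[0,T]}\|\tilde w(s)-w^*(s)\|\le Ce^{L_hT}\int_0^T\|\tilde\theta(\sigma)-\theta^*(\tilde w(\sigma))\|\,d\sigma$.

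It remains to control this integral, which is the main obstacle and the heart of the singular-perturbation argument. Repeating the Lyapunov computation of the second step in rescaled time, where the contraction rate becomes the blown-up $\alpha\mu/\beta$ and the forcing is $O(1)$, gives $\tfrac{d}{ds}\|\tilde\theta-\theta^*(\tilde w)\|\le-\tfrac{\alpha\mu}{\beta}\|\tilde\theta-\theta^*(\tilde w)\|+C'$, hence $\|\tilde\theta(s)-\theta^*(\tilde w(s))\|\le\|\theta_0-\theta^*(w_0)\|e^{-\alpha\mu s/\beta}+\tfrac{C'\beta}{\alpha\mu}$. Integrating over $[0,T]$ bounds the forcing integral by $\tfrac{\beta}{\alpha\mu}\|\theta_0-\theta^*(w_0)\|+\tfrac{C'\beta T}{\alpha\mu}$, which tends to $0$ as $\beta\to0$; this is where the boundary (initial) layer is handled, the point being that although $\tilde\theta(0)$ need not equal $\theta^*(w_0)$, the transient lasts a rescaled time $O(\beta)$ and so contributes negligibly to the integral. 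The delicate parts I expect are purely quantitative: justifying that the trajectory stays in a fixed compact set uniformly in $\beta$ (so the Lipschitz and boundedness constants are genuinely uniform), and checking that the preconditioner $P(w)=\mathrm{diag}(w)-ww^T$, which degenerates at the boundary of $\Delta_n$, does not spoil the Lipschitz estimate for $w\mapsto P(w)\nabla h(w)$ — both handled by the compactness of $\Delta_n$ and the uniform lower bound $\mu$ on the inner Hessian.
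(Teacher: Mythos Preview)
Your proposal is correct and takes essentially the same approach as the paper's proof: a Lyapunov estimate on $\|\theta-\theta^*(w)\|$ via the $\mu$-strong convexity of $G$, a time rescaling $s=\beta t$ for the slow variable, and a Gr\"onwall argument using the identity $\Psi(\theta^*(w),w)=\nabla h(w)$ to compare $\tilde w$ with $w^*$. The only place the paper is more careful is the compactness step: rather than inferring boundedness of $\theta$ from the Lyapunov estimate (which is mildly circular, since your constants $M,L_*,L_\Psi$ are themselves defined on that compact set), it first proves an a~priori bound $\|\theta(t)\|\le R$ directly from a differential inequality on $\tfrac12\|\theta\|^2$, with $R$ independent of $\alpha,\beta$.
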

The proof of this result uses the classical tools in bilevel literature~\cite {ghadimi2018approximation}.
This result highlights that, as expected, if $\alpha \gg \beta$, the variable $\theta$ is tracking $\theta^*(w)$, which in turn means that $w$ follows the direction of the true gradient of $h$: we recover the Exact Bilevel dynamics, that corresponds to the gradient flow of $h$.
The rest of this section is devoted to understanding what happens in the other regime where $\beta \gg\alpha$, when the dynamics in $w$ are much faster than that in $\theta$. 
\subsection{Mirror descent flows on the simplex}
Before understanding the joint dynamics, we consider the dynamics of the warm-started bilevel ODE~\eqref{eq:bilevel_flow} when only $w$ evolves, i.e., when $\alpha=0$ and $\theta(t) = \theta^0$.
For a vector field $\phi:\mathbb{R}^n\to\mathbb{R}^n$, we consider mirror descent updates given by $\tilde{w}^{k+1} = w^k \odot \exp(-\eta \phi(w^k))$ and $w^{k+1} = \frac{\tilde{w}^{k+1}}{\|\tilde{w}^{k+1}\|_1}$.
Letting the step size $\eta$ to $0$, we recover the \emph{mirror descent flow}~\cite{gunasekar2021mirrorless}, that is the ODE 
\begin{equation}
\label{eq:mirror_flow}    
\dot{w} = - \Phi(w)\text{ with }\Phi(w) = P(w) \phi(w).
\end{equation}
When the vector field $\phi$ is the gradient of a function $f:\mathbb{R}^n\to \mathbb{R}$, we recover mirror descent to minimize $f$ on the simplex, with standard guarantees. 
In our warm-started formulation, the field $\phi(w) = \Psi(\theta_0, w)$ does not correspond to a gradient since its Jacobian may not be symmetric. 
We analyze the stationary points of this ODE and their stability.
We recall that the support of a vector $w$, $\mathrm{Supp}(w)$, is the set of indices in $\{1, n\}$ such that $w_i \neq 0$, and let $l$ its cardinal.
\begin{proposition}
    \label{prop:stat}
    The stationary points of the mirror descent flow \eqref{eq:mirror_flow} are the $w$ such that $\phi(w)\mid_{\mathrm{Supp}(w)}$ is proportional to $\mathbb{1}_l$.
\end{proposition}
All vectors of the form $w = (0, \dots, 1, \dots, 0)$ fulfill this condition, but there might be other solutions with a larger support.
To study the stability of these points, we turn to the Jacobian of $\Phi$. 
Letting for short $\phi = \phi(w)\in\mathbb{R}^n$ and $J = D\phi(w)\in\mathbb{R}^{n\times n}$, we find 
$$
D\Phi(w) = \mathrm{diag}(\phi) + \mathrm{diag}(w)J - \langle w, \phi\rangle I_n - w(\phi^T + w^T J)
$$
Because of the simplex constraint, we only care about this Jacobian in directions that are in the tangent space $T_\Delta = \{\delta \in \mathbb{R}^n|\enspace \sum_{i=1}^n \delta_i = 0\}$.
Since $\Phi$ makes the flow stay in the simplex, we have that $D\Phi(w)[\delta] \in T_{\Delta}$ for any $\delta\in T_\Delta$.
Without loss of generality, we assume that $w_1, \dots, w_l\neq 0$ and $w_{l+1} = \dots =w_n=0$.
The Jacobian greatly simplifies for coordinates $i$ such that $w_i = 0$; indeed in this case for any vector $\delta$ in $\mathbb{R}^n$ we have $D\Phi(w)[\delta]_i = (\phi_i - \langle w, \phi\rangle)\delta_i$. On the other hand, for a coordinate $i$ in the support, under the optimality conditions, taking a displacement of the form $\delta = (\tilde{\delta}_l, 0, \dots, 0)$, we find that $D\Phi(w)[\delta]_i = w_i( [\tilde{J}\tilde{\delta}]_i - \sum_{j=1}^l(\phi_j + [\tilde{J}^Tw]_j)\tilde{\delta}_j)$, where $\tilde{J}$ is the upper-left $l\times l$ block of $J$. In other words, $D\Phi(w)$ has the following structure when $w$ is a stationary point of the flow:
$$
D\Phi(w) = \begin{bmatrix}
    P(\tilde{w})\tilde{J} & (*) \\
    0 & \mathrm{diag}(\phi_i - \langle w, \phi\rangle)
\end{bmatrix}
$$
We readily obtain the stability condition, which is that this matrix has all positive eigenvalues:
\begin{proposition}
    \label{prop:stable_flow}
    A stationary point $w$ of \eqref{eq:mirror_flow} is stable if and only if for all $i\notin\mathrm{Supp}(w)$ we have $\phi(w)_i > \langle w, \phi(w)\rangle$ and the matrix $P(\tilde{w})\tilde{J}$, as a linear operator $T_\Delta \to T_\Delta$, has eigenvalues with positive real parts. 
\end{proposition}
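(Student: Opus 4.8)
The plan is to read off the (linear, asymptotic) stability of the stationary point directly from the spectrum of the linearized flow $\dot\delta = -D\Phi(w)\,\delta$, using the block-triangular form of $D\Phi(w)$ already displayed above. Since the flow is $\dot w = -\Phi(w)$, its generator at the fixed point is $-D\Phi(w)$, so a hyperbolic equilibrium is asymptotically stable exactly when $-D\Phi(w)$ has spectrum in the open left half-plane, i.e.\ $D\Phi(w)$ has all eigenvalues with positive real part. Because $D\Phi(w)$ is block upper-triangular with diagonal blocks $P(\tilde w)\tilde J$ and $\mathrm{diag}(\phi_i - \langle w,\phi\rangle)_{i\notin\mathrm{Supp}(w)}$, its spectrum is the union of the spectra of those two blocks, so the eigenvalue condition splits into exactly the two conditions of the statement. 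The spectral splitting is therefore the easy, mechanical part; the real work is justifying that these spectral conditions control stability on the constrained manifold, where the boundary directions are one-sided.

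First I would treat the coordinates $i\notin\mathrm{Supp}(w)$. These are boundary (transverse) directions: a perturbation keeping $w$ in $\Delta_n$ may only push $w_i$ from $0$ to a \emph{nonnegative} value, so the admissible perturbations form a cone rather than a subspace, and the standard two-sided linearization theorem does not apply verbatim. The vanishing lower-left block of $D\Phi(w)$ makes these directions decouple: writing $\delta=(\tilde\delta,\delta_\perp)$, the component $\delta_\perp$ obeys the autonomous diagonal system $\dot\delta_i = -(\phi_i - \langle w,\phi\rangle)\,\delta_i$. In fact this is exact, not merely linear: from $\Phi(w)_i = w_i(\phi_i-\langle w,\phi\rangle)$ one has $\dot w_i = -w_i(\phi_i-\langle w,\phi\rangle)$, so a small positive perturbation of $w_i$ decays back to $0$ precisely when $\phi_i-\langle w,\phi\rangle>0$ and grows away otherwise. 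This yields $\phi(w)_i > \langle w,\phi(w)\rangle$ as necessary and sufficient for transverse stability, the strict inequality ruling out the marginal case.

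Next I would handle the support. The face $\mathcal{F}=\{w\in\Delta_n: w_i=0,\ i\notin\mathrm{Supp}(w)\}$ is invariant, since $\Phi(w)_i=w_i(\phi_i-\langle w,\phi\rangle)$ vanishes whenever $w_i=0$, and on $\mathcal{F}$ the dynamics coincides with the $l$-dimensional mirror flow $\dot{\tilde w} = -P(\tilde w)\,\phi(w)\mid_{\mathrm{Supp}(w)}$ (here $\langle w,\phi\rangle=\langle\tilde w,\phi\mid_{\mathrm{Supp}(w)}\rangle$, so the restriction is self-contained). Its equilibrium $\tilde w$ is interior to $\mathcal{F}$, so perturbations are genuinely two-sided, and restricting to sum-zero displacements $\tilde\delta\in T_\Delta$ the extra $\sum_j\phi_j\tilde\delta_j$ term in the Jacobian vanishes because $\phi\mid_{\mathrm{Supp}(w)}\propto\mathbb{1}_l$ at the stationary point; this is exactly why the support block acts as the operator $P(\tilde w)\tilde J$ on $T_\Delta$. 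Since $P(\tilde w)$ maps into $\mathbb{1}_l^\perp$, this is a bona fide operator $T_\Delta\to T_\Delta$, the classical linearization theorem applies, and stability within $\mathcal{F}$ holds iff all its eigenvalues have positive real part, the second condition.

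Finally I would glue the two analyses into stability of the full constrained system, exploiting the cascade structure: $\delta_\perp$ evolves autonomously and, under the first condition, decays exponentially, after which the reduced dynamics on $\mathcal{F}$ is governed solely by $P(\tilde w)\tilde J$; conversely, failure of either condition produces an admissible growing direction, giving necessity. For sufficiency I would either run a reduction-to-the-face/cascade argument (the transverse part decays and feeds a vanishing forcing into the asymptotically stable face dynamics) or construct a Lyapunov function combining the linear penalty $\sum_{i\notin\mathrm{Supp}(w)} w_i$ on the transverse cone with a quadratic form adapted to $-P(\tilde w)\tilde J$ on $T_\Delta$. The main obstacle is precisely this gluing step: because the transverse directions are one-sided while the support directions are two-sided, one cannot simply invoke Hartman--Grobman at this boundary equilibrium, and one must verify that positivity of the transverse signs together with positivity of the $P(\tilde w)\tilde J$ spectrum truly controls every admissible trajectory near $w$; everything else follows directly from the block-triangular Jacobian.
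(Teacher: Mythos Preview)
Your approach is essentially the same as the paper's: read off stability from the spectrum of the block upper-triangular Jacobian $D\Phi(w)$, which splits into the eigenvalues of $P(\tilde w)\tilde J$ on $T_\Delta$ and the diagonal entries $\phi_i-\langle w,\phi\rangle$ for $i\notin\mathrm{Supp}(w)$. The paper's proof is in fact just your first paragraph---it simply asserts that stability is equivalent to positivity of the eigenvalues of the two diagonal blocks and does not address the one-sided boundary directions or the gluing step you raise; your treatment is strictly more careful than the paper's on these points.
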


We can finally quantify the local speed of convergence towards these stationary points:
\begin{proposition}
    \label{prop:flow_speed}
    Let $w^*$ be a stable stationary point of \eqref{eq:mirror_flow}, and let $\delta$ be in the tangent cone at $w^*$.
    Then, letting $w(t)$ the trajectory of the ODE \eqref{eq:mirror_flow} starting from $w^* + \delta$, we have $w(t) = w^* + \exp(-D\Phi(w^*)t)\delta + o(\delta)$.
\end{proposition}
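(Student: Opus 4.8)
The plan is to treat this as a quantitative linearization statement for the autonomous ODE $\dot w = -\Phi(w)$ near the equilibrium $w^*$, exploiting the stability established in \autoref{prop:stable_flow} to obtain error control that is uniform in $t$. First I would record the two structural facts already available: (i) $\Phi$ is smooth near $w^*$, since $\Phi(w) = P(w)\phi(w)$ with $P$ polynomial and $\phi(w) = \Psi(\theta_0, w)$ smooth by invertibility of the Hessian under \autoref{ass:strong_convexity}, so a first-order Taylor expansion with an $o$-remainder is legitimate; and (ii) because the affine constraint $\sum_i w_i = 1$ is preserved by the flow, writing $w(t) = w^* + e(t)$ forces $e(t)\in T_\Delta$ for all $t$, while $D\Phi(w^*)$ maps $T_\Delta$ into itself.

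Next I would set up the error dynamics. Using $\Phi(w^*) = 0$ and smoothness, $\Phi(w^* + e) = D\Phi(w^*)e + R(e)$ with $\|R(e)\|\le \kappa(\|e\|)\|e\|$ and $\kappa(r)\to 0$ as $r\to 0$. Setting $e_{\mathrm{lin}}(t) = \exp(-D\Phi(w^*)t)\delta$ and $\epsilon(t) = e(t) - e_{\mathrm{lin}}(t)$, we get $\epsilon(0)=0$ and $\dot\epsilon = -D\Phi(w^*)\epsilon + R(e)$, hence by variation of constants
$$\epsilon(t) = \int_0^t \exp\bigl(-D\Phi(w^*)(t-s)\bigr)R(e(s))\,ds.$$
The stability hypothesis is exactly what makes this integral controllable: since $w^*$ is stable, every eigenvalue of $D\Phi(w^*)$ restricted to $T_\Delta$ has positive real part, so there are $C, c > 0$ with $\|\exp(-D\Phi(w^*)t)\|_{T_\Delta}\le C e^{-ct}$ for all $t\ge 0$; in particular $\|e_{\mathrm{lin}}(t)\|\le C e^{-ct}\|\delta\|$.

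I would then close the estimate by a bootstrap Grönwall argument. Assuming inductively $\|e(s)\|\le 2C\|\delta\|$ on $[0,t]$, substituting $\|R(e(s))\|\le \kappa(2C\|\delta\|)\|e(s)\|$ into the Duhamel formula and using the exponential decay of the kernel yields $\|\epsilon(t)\|\le C'\kappa(2C\|\delta\|)\|\delta\|$. This simultaneously keeps $\|e(t)\|\le \|e_{\mathrm{lin}}(t)\| + \|\epsilon(t)\|$ below $2C\|\delta\|$ for $\|\delta\|$ small (validating the bootstrap) and shows $\|\epsilon(t)\| = o(\|\delta\|)$ uniformly in $t$, which is precisely the claim.

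The main obstacle, and the reason the statement uses the tangent \emph{cone} rather than the tangent \emph{space}, is the boundary behavior when $w^*$ lies on a face of the simplex. Two points need care. First, one must ensure the nonlinear trajectory stays in $\Delta_n$: this is where $\delta$ in the tangent cone is used, since the multiplicative structure $\dot w_i = -w_i(\phi_i - \langle w, \phi\rangle)$ keeps each face invariant, so a non-support coordinate started at $\delta_i\ge 0$ stays nonnegative and the flow remains admissible. Second, the linearization must stay accurate on the non-support coordinates even though they sit at the boundary; here the block-triangular form of $D\Phi(w^*)$ derived above, with a vanishing lower-left block and a diagonal lower-right block of strictly positive entries by stability, is essential, because it decouples the non-support directions at first order and makes them contract, so their contribution to $R(e)$ remains $o(\|\delta\|)$ rather than corrupting the estimate. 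Checking that the Grönwall constants can be taken uniform across the cone, and that no eigenvalue with zero real part enters from the boundary block, is the delicate part of the argument.
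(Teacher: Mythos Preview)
Your proposal is correct. The paper itself does not prove \autoref{prop:flow_speed} at all: its entire ``proof'' is a one-line citation to Corollary~4.23 in Chicone's ODE textbook~\cite{chicone2006ordinary}. What you have written is essentially a self-contained sketch of the standard linearization-at-a-hyperbolic-sink argument that such a corollary packages---Taylor expand $\Phi$ at $w^*$, use the variation-of-constants formula for the error $\epsilon(t)$, invoke the spectral gap from \autoref{prop:stable_flow} to get exponential decay of the linearized semigroup on $T_\Delta$, and close with a bootstrap Gr\"onwall. Your handling of the tangent-cone issue (invariance of faces from the multiplicative structure of the flow, and the block-triangular form of $D\Phi(w^*)$ ensuring the non-support block is diagonal with strictly positive entries) is more careful than anything the paper spells out. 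In short: the paper outsources the result, you reconstruct it; your version is more informative but both arrive at the same place.
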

This result is classical in ODE theory~\cite{chicone2006ordinary}.
We now turn to the case of interest for bilevel optimization, where $\phi$ corresponds to the hyper-gradient field with frozen parameters $\theta$, i.e., $\phi(w) = \Psi(\theta_0, w)$.
\subsection{The Bilevel Flow with Frozen Parameters}
\label{subsec:dyn}
\begin{figure}[t]
\centering
\includegraphics[width=.9\columnwidth]{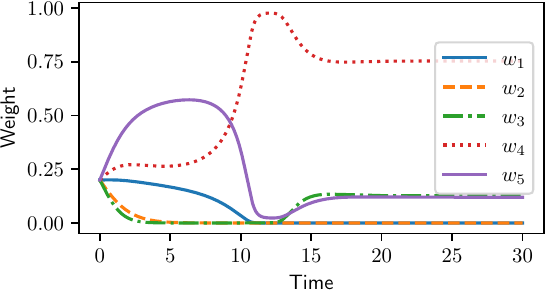}
    \caption{Weights dynamics from the mirror-flow ODE~\eqref{eq:mirror_flow} with the field $\phi$ in Eq.~\eqref{eq:frozen_field}, with $n=5$ training samples and $p=3$ parameters. 
    Matrix $\Gamma$ is randomly sampled with i.i.d. Gaussian entries, same for $\nabla F(\theta_0)$, and the individual Hessians are taken as $\nabla^2_{\theta\theta}\ell(\theta; x_i) = u_iu_i^T + 0.1I_p$ where the $u_i$ are drawn i.i.d. from random Gaussians.
    The weights follow a non-trivial dynamic that eventually converges to a solution with at most $p$ non-zero weights, as predicted by \autoref{prop:sparse_weights_frozen}.
    Most random initializations lead to only one non-zero weight; we display a rarer dynamic here.}
\label{fig:weights_dynamic}
\end{figure}
With fixed parameters $\theta_0$, the field of interest $\phi(w) = \Psi(\theta_0, w)$ is given by the simple equation
\begin{equation}
    \label{eq:frozen_field}
    \phi(w) = \Gamma g(w),
\end{equation}
where $\Gamma =[\nabla \ell(\theta_0; x_1),\dots, \nabla \ell(\theta_0; x_n)]\in\mathbb{R}^{n\times p}$ is a matrix containing all the inner gradients, and $g(w) = \left(\sum_{i=1}^n w_i \nabla^2_{\theta}\ell(\theta_0; x_i)\right)^{-1}\nabla F(\theta_0)\in\mathbb{R}^p$ is a the outer gradient $\nabla F(\theta_0)$ transformed by the metric induced by the Hessian of the inner function $G$.
\autoref{fig:weights_dynamic} illustrates the weights' dynamics with few samples on a low-dimensional problem and its ``sparsifying'' effect.
The field $\phi$ only depends mildly on $w$, through the impact of $w$ on the Hessian of the inner problem.
We start our analysis with the simple case where all Hessians $\nabla_\theta^2\ell(\theta_0, x_i)$ are the same, in which case $g$ is constant:
\begin{proposition}
    \label{prop:constant_frozen_flow}
    If $g(w)$ is constant in Eq.~\ref{eq:frozen_field}, then the field $\phi(w)$ is constant, equal to $\phi\in\mathbb{R}^n$. The mirror flow~\eqref{eq:mirror_flow} converges to $w^*$ such that $w^*_i=0$ if $i\notin\arg\max \phi$, and $w^*_i$ is proportional to $w^*_0$ otherwise.
\end{proposition}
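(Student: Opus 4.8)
The plan is to exploit that, when all individual Hessians coincide, the field $\phi$ is genuinely constant, so that the mirror flow decouples into an explicitly integrable replicator system whose long-time behaviour can be read off directly. First I would verify the constancy of $\phi$. Writing $\nabla^2_\theta \ell(\theta_0; x_i) = H$ for the common Hessian and using that $w \in \Delta_n$ satisfies $\sum_{i=1}^n w_i = 1$, the inner Hessian reduces to $\sum_{i=1}^n w_i H = H$ for \emph{every} $w \in \Delta_n$. Hence $g(w) = H^{-1}\nabla F(\theta_0)$ is independent of $w$, and $\phi(w) = \Gamma g(w) = \Gamma H^{-1}\nabla F(\theta_0) =: \phi$ is a fixed vector. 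This is the routine step.

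Next I would expand the flow~\eqref{eq:mirror_flow} coordinatewise. With $\phi$ constant, $\Phi(w) = P(w)\phi = (\mathrm{diag}(w) - w w^T)\phi$ has $i$-th entry $w_i(\phi_i - \langle w, \phi\rangle)$, so the ODE becomes the replicator equation $\dot w_i = w_i(\phi_i - \langle w, \phi\rangle)$. I would then solve it in closed form with the Gibbs ansatz
\begin{equation*}
w_i(t) = \frac{w_i^0 \exp(\phi_i t)}{\sum_{j=1}^n w_j^0 \exp(\phi_j t)},
\end{equation*}
verifying it by differentiating $\log w_i(t) = \log w_i^0 + \phi_i t - \log Z(t)$ with $Z(t) = \sum_j w_j^0 \exp(\phi_j t)$ and observing that $\frac{d}{dt}\log Z(t) = \langle w(t), \phi\rangle$, which returns the replicator equation. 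Since the field is smooth on the simplex, uniqueness of solutions guarantees this is the trajectory; the formula also makes transparent that $w(t)$ stays in the simplex, with positive entries summing to one, for all finite $t$.

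Finally I would pass to the limit $t \to \infty$. Setting $\phi_{\max} = \max_i \phi_i$ and factoring $\exp(\phi_{\max} t)$ out of numerator and denominator, each coordinate with $\phi_i < \phi_{\max}$ carries a factor $\exp((\phi_i - \phi_{\max}) t) \to 0$, so $w_i(t) \to 0$ whenever $i \notin \arg\max \phi$; for $i \in \arg\max \phi$ the exponentials cancel and $w_i(t) \to w_i^0 / \sum_{j \in \arg\max \phi} w_j^0$. The limit $w^*$ is therefore supported on $\arg\max \phi$ with entries proportional to the initial weights $w_i^0$, which is exactly the claim.

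The computation contains no real analytical obstacle, since the ODE is exactly integrable; the only points requiring care are bookkeeping. One must track the sign when reducing $P(w)\phi$ to coordinates, so that mass concentrates on $\arg\max \phi$ (consistent with the hyper-gradient increasing weights whose inner gradient aligns with the outer gradient), and one must allow $\arg\max\phi$ to be a set rather than a singleton, in which case the surviving coordinates retain their initial proportions — this is the content of the statement that $w_i^*$ is proportional to $w_i^0$. As a sanity check the limit agrees with Propositions~\ref{prop:stat}--\ref{prop:stable_flow}: the face supported on $\arg\max\phi$ is a stationary point and is the unique stable one reachable from a full-support initialization, but the explicit solution is shorter and additionally yields the exact exponential rate of concentration.
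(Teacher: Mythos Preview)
Your approach is essentially the same as the paper's: both reduce the mirror flow with constant $\phi$ to the replicator equation, write down the explicit softmax solution $w_i(t)\propto w_i^0\exp(\pm\phi_i t)$, and read off the $t\to\infty$ limit; the only cosmetic difference is that the paper motivates the closed form via the discrete mirror-descent recursion while you verify it by differentiating $\log w_i(t)$. One bookkeeping slip: you silently drop the minus sign when passing from $\dot w=-\Phi(w)$ to $\dot w_i=w_i(\phi_i-\langle w,\phi\rangle)$ (the paper's own proof carries the correct sign in the formula but has the matching inconsistency at the $\arg\max$ step), though this does not affect the structure of the argument.
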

In particular, if $\phi$ only has a unique maximal coefficient, the mirror flow converges to a $w^*$ with only \emph{one} non-zero weight -- which is as sparse as it gets.
We now turn to the general case with non-constant Hessians.

The vector field $\phi$ has a ``low rank'' structure, as it is parameterized by $g(w)$ of dimension $p$, which generally is much lower than $n$.
It is, therefore, natural that it is hard to satisfy the stability conditions of \autoref{prop:stat} with non-sparse weights: we have $n$ conditions to verify with a family of vectors $g(w)$ that is of dimension $p$. 
To formalize this intuition, we define the following set:
\begin{definition}
    \label{def:feasible_set}
    The set $\mathcal{I}_l^p$ is the set of $l\times p$ matrices $Z$ such that $\mathbb{1}_l\in \mathrm{range}(Z)$ or $\mathbb{0}_l\in \mathrm{range}(Z)$.
\end{definition}
In other words, this is the set of matrices $Z$ for which the equation $Zx = \mathbb{1}_l$ or the equation $Zx = \mathbb{0}_l$ has a non-zero solution $x$.
This set either contains most matrices if $l \leq p$ or very few if $p < l$:
\begin{proposition}
    \label{prop:set_proba}
    Assume that $Z\in \mathbb{R}^{l\times p}$ has i.i.d. entries drawn from a continuous distribution. If $l \leq p$, then $\mathbb{P}(Z\in \mathcal{I}_l^p) = 1$, while if $l > p$ then $\mathbb{P}(Z\in \mathcal{I}_l^p) = 0$.
\end{proposition}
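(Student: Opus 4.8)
The plan is to reduce everything to one classical fact: if $q$ is a polynomial that does not vanish identically and the entries of $Z$ are i.i.d.\ from a continuous (atomless) distribution, then $\mathbb{P}(q(Z)=0)=0$. This follows by induction on the number of entries, the base case being that an atomless law on $\mathbb{R}$ assigns zero mass to the finitely many roots of a univariate nonzero polynomial, and the inductive step conditioning on all but one variable so that the leading coefficient is a.s.\ nonzero. Every event below will be phrased through vanishing of minors, so in each case it suffices to exhibit a single minor whose determinant is a nonzero polynomial. I first rewrite the defining condition using the clarification after Definition~\ref{def:feasible_set}: $Z\in\mathcal{I}_l^p$ holds exactly when $\mathbb{1}_l\in\mathrm{range}(Z)$ or $\ker(Z)\neq\{0\}$.

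\emph{Case $l\le p$.} I would show $Z$ has full row rank $l$ almost surely. The event $\mathrm{rank}(Z)<l$ is the simultaneous vanishing of all $l\times l$ minors, and at least one of these — say the minor on the first $l$ columns — is a nonzero polynomial, since it equals $1$ when those columns are set to $I_l$. Hence $\mathbb{P}(\mathrm{rank}(Z)<l)=0$. On the complementary full-measure event, $\mathrm{range}(Z)$ is an $l$-dimensional subspace of $\mathbb{R}^l$, hence all of $\mathbb{R}^l$, so $\mathbb{1}_l\in\mathrm{range}(Z)$ and $Z\in\mathcal{I}_l^p$, giving $\mathbb{P}(Z\in\mathcal{I}_l^p)=1$.

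\emph{Case $l>p$.} The kernel alternative is immediate: $\ker(Z)\neq\{0\}$ forces $\mathrm{rank}(Z)<p$, which by the same $p\times p$ minor argument has probability $0$. The crux is the range alternative, i.e.\ showing $\mathbb{P}(\mathbb{1}_l\in\mathrm{range}(Z))=0$. I would observe that $\mathbb{1}_l\in\mathrm{range}(Z)$ implies the augmented $l\times(p+1)$ matrix $[Z\,|\,\mathbb{1}_l]$ has rank at most $p$, so all its $(p+1)\times(p+1)$ minors vanish (such minors exist because $l\ge p+1$). It then suffices to exhibit one $(p+1)\times(p+1)$ minor of $[Z\,|\,\mathbb{1}_l]$ that, as a function of the random entries of $Z$, is not identically zero. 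Taking the submatrix on rows $1,\dots,p+1$, its determinant is a polynomial in the entries of $Z$ whose last column is fixed to $\mathbb{1}_{p+1}$; evaluating at the configuration where the left $(p+1)\times p$ block has rows $e_1,\dots,e_p,0$ produces a block-triangular matrix of determinant $1$, so this minor polynomial is nonzero. Therefore $\{\mathbb{1}_l\in\mathrm{range}(Z)\}$ lies in the zero set of a nonzero polynomial and has probability $0$.

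A union bound over the two alternatives then yields $\mathbb{P}(Z\in\mathcal{I}_l^p)=0$ when $l>p$. The one genuinely delicate step is the crux above: because the appended column $\mathbb{1}_l$ is deterministic, the augmented matrix is not a generic i.i.d.\ matrix, so I must verify by an explicit evaluation that some $(p+1)$-minor survives as a nonzero polynomial, which is exactly what the $e_1,\dots,e_p,0$ choice does. A secondary caution is that ``continuous distribution'' should be read as atomless for the polynomial-vanishing lemma to apply verbatim; under the stronger assumption of absolute continuity the conclusion is even more direct, since the zero set of a nonzero polynomial is Lebesgue-null.
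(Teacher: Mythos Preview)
Your proof is correct and follows the same strategy as the paper's: surjectivity of $Z$ almost surely when $l\le p$, and almost-sure non-membership of $\mathbb{1}_l$ in $\mathrm{range}(Z)$ together with trivial kernel when $l>p$. The paper's own argument is a two-line sketch that merely asserts these facts; you supply the rigorous justification via the polynomial-vanishing lemma and explicit minor constructions, which is strictly more than the paper provides.
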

This set is linked to the stationary points of the flow~\eqref{eq:mirror_flow}:
\begin{proposition}
\label{prop:sparse_weights_frozen}
    If $w$ is a stationary point of~\eqref{eq:mirror_flow} with the hypergradient~\eqref{eq:frozen_field}, letting $l$ the size of the support of $w$, we have $\Gamma|_{\mathrm{Supp}(w)}\in \mathcal{I}_l^p$.
\end{proposition}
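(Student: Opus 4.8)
The plan is to combine the characterization of stationary points in \autoref{prop:stat} with the low-rank form \eqref{eq:frozen_field} of the frozen field, so that the whole statement collapses to a single linear-algebraic dichotomy. Write $S = \mathrm{Supp}(w)$ and $Z = \Gamma|_{S}\in\mathbb{R}^{l\times p}$ for the submatrix of $\Gamma$ keeping only the rows indexed by $S$. Since $\phi(w) = \Gamma g(w)$ and restricting to the rows in $S$ commutes with the right-multiplication by $g(w)$, we have $\phi(w)|_{S} = Z\,g(w)$. The goal $\Gamma|_{S}\in\mathcal{I}_l^p$ is exactly a statement about this matrix $Z$.

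First I would invoke \autoref{prop:stat}: because $w$ is a stationary point, $\phi(w)|_{S}$ is proportional to $\mathbb{1}_l$, i.e.\ there is a scalar $c$ with $Z\,g(w) = c\,\mathbb{1}_l$. The argument then splits on whether $c$ vanishes. If $c\neq 0$, dividing by $c$ yields $Z\bigl(g(w)/c\bigr) = \mathbb{1}_l$, so $\mathbb{1}_l\in\mathrm{range}(Z)$ and hence $Z\in\mathcal{I}_l^p$ by \autoref{def:feasible_set}. If $c = 0$, then $g(w)$ lies in the kernel of $Z$, and to conclude $Z\in\mathcal{I}_l^p$ I need this kernel to be nontrivial, i.e.\ I need $g(w)\neq 0$.

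The hard (or at least the only nonroutine) part is certifying $g(w)\neq 0$ in the branch $c = 0$, since this cannot be read off directly. Here I would use the explicit form $g(w) = \bigl(\sum_{i} w_i \nabla^2_{\theta}\ell(\theta_0; x_i)\bigr)^{-1}\nabla F(\theta_0)$ together with \autoref{ass:strong_convexity}: the weighted Hessian sum has all eigenvalues at least $\mu > 0$, so it is positive definite and invertible, whence $g(w) = 0$ would force $\nabla F(\theta_0) = 0$. Excluding this degenerate case, in which $\theta_0$ already minimizes the outer loss and reweighting is vacuous, we get $g(w)\neq 0$, so $\ker(Z)\neq\{0\}$ and again $Z\in\mathcal{I}_l^p$.

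In either case $\Gamma|_{S} = Z\in\mathcal{I}_l^p$, which is the claim. The proof is essentially immediate once the reduction $\phi(w)|_S = Z\,g(w)$ and the proportionality condition of \autoref{prop:stat} are in place; strong convexity is used only to guarantee invertibility of the weighted Hessian, which is what lets the nonvanishing of $g(w)$ reduce to the generic nondegeneracy $\nabla F(\theta_0)\neq 0$.
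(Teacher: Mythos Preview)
Your proof is correct and follows essentially the same route as the paper: both invoke \autoref{prop:stat} to obtain $\Gamma|_{\mathrm{Supp}(w)}\,g(w)=c\,\mathbb{1}_l$ and then split on whether $c$ vanishes. You are in fact slightly more careful than the paper in the $c=0$ branch, explicitly arguing via \autoref{ass:strong_convexity} that $g(w)\neq 0$ unless $\nabla F(\theta_0)=0$; the paper's proof leaves this implicit when asserting a ``nonzero solution'' to $Zx=\mathbb{0}_l$.
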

This proposition immediately shows that, in general, we cannot find a stationary point of~\eqref{eq:mirror_flow} with support larger than $p$, the number of features: the mirror descent dynamics applied with the hypergradient, if it converges, will converge to a sparse solution with at most $p$ non-zero coefficients.
Note that we could not show that the corresponding flow always converges; our results only indicate that \emph{if} the flow converges, then it must be towards a sparse solution.
Through numerical simulations, we have identified trajectories of the ODE that oscillate and do not seem to converge.
\subsection{Two Variables Dynamics}
We conclude our analysis by going back to the two variables ODE~\eqref{eq:bilevel_flow} and use the previous analysis to show that the sparsity issue also impacts the warm-started bilevel problem in the case where $\beta \gg \alpha$.
To do so, we first assume that the mirror flow ODE with frozen parameters $\theta$ in Eq~\eqref{eq:mirror_flow} converges for all $\theta$.
\begin{assumption}
\label{ass:omega}
    The ODE $\dot{w} = - \Psi(\theta, w)$ starting from $w_0$, with fixed $\theta$, is such that $w(t)$ goes to a limit as $t$ goes to infinity. 
    We call $\Omega(\theta, w_0)$ this limit.
\end{assumption}
Note that, following \autoref{prop:sparse_weights_frozen}, the limit $\Omega$ is in general sparse with a support smaller than $p$.
We now give a result similar to \autoref{prop:recover_bilevel} but when the dynamics in $w$ gets much faster than that in $\theta$:
\begin{theorem}
    \label{prop:sparse_dynamics}
    Let $\theta^*(t)$ the solution of the ODE $\dot \theta = -\nabla G(\theta, \Omega(\theta, w_0))$, and $\theta^{\alpha, \beta}, w^{\alpha, \beta}$ the solution of \eqref{eq:bilevel_flow},where $w$ starts from $w_0$. Under technical assumptions described in Appendix, for all $\beta$, for all time horizon $T$, we have  
    $$\lim_{\alpha\to 0}\sup_{t\in[0, T]}\|\theta^{\alpha, \beta}(t / \alpha) - \theta^*(t)\| = 0.$$
\end{theorem}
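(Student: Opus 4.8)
The plan is to read Theorem \ref{prop:sparse_dynamics} as a Tikhonov-type singular-perturbation statement, symmetric to \autoref{prop:recover_bilevel} but with the roles of the fast and slow variables exchanged, and to reuse the same classical machinery. First I would pass to the timescale at which the statement is phrased: set $\epsilon := \alpha/\beta$ and define $\Theta_\epsilon(t):=\theta^{\alpha,\beta}(t/\alpha)$, $W_\epsilon(t):=w^{\alpha,\beta}(t/\alpha)$. A direct chain-rule computation from \eqref{eq:bilevel_flow} shows that this pair solves the fast--slow system
\[
\dot\Theta_\epsilon = -\nabla G(\Theta_\epsilon, W_\epsilon), \qquad \epsilon\,\dot W_\epsilon = -P(W_\epsilon)\,\Psi(\Theta_\epsilon, W_\epsilon),
\]
with $\theta$ the slow variable and $w$ the fast variable, and the regime $\beta\gg\alpha$ corresponding to $\epsilon\to 0$ as $\alpha\to 0$ with $\beta$ fixed. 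The claim is then precisely that the slow component $\Theta_\epsilon$ converges uniformly on $[0,T]$ to the reduced trajectory $\theta^*$ as $\epsilon\to 0$.

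Second, I would identify the reduced (outer) dynamics. Freezing $\theta$ and looking at the fast timescale $\sigma = t/\epsilon$, the boundary-layer equation is $\tfrac{dw}{d\sigma} = -P(w)\Psi(\theta, w)$, which is exactly the frozen mirror flow \eqref{eq:mirror_flow} of \autoref{subsec:dyn}. By \autoref{ass:omega} its solution started from $w_0$ converges to $\Omega(\theta, w_0)$, so the formal $\epsilon=0$ limit pins $w$ to the quasi-steady state $w=\Omega(\theta,w_0)$ and leaves the reduced equation $\dot\theta = -\nabla G(\theta, \Omega(\theta, w_0))$ --- which is the ODE defining $\theta^*$. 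It then remains to show that the true slow trajectory tracks this reduced flow.

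Third, I would invoke Tikhonov's theorem and verify its hypotheses from the earlier results. The crucial uniform exponential attractivity of the quasi-steady state is supplied by \autoref{prop:stable_flow} and \autoref{prop:flow_speed}: at a stable stationary point the linearization $D\Phi$ restricted to $T_\Delta$ has eigenvalues with positive real parts, yielding a uniform contraction rate on a neighborhood of the branch $\theta\mapsto\Omega(\theta,w_0)$. The ``technical assumptions in the Appendix'' are exactly what makes this branch admissible: that $\Omega(\theta,w_0)$ is single-valued and continuous in $\theta$, that its support is locally constant (generic by \autoref{prop:sparse_weights_frozen} and \autoref{prop:set_proba}, which is also what forces the reduced dynamics to fit $\theta$ on at most $p$ samples), and that on this fixed support the equilibrium is a smooth, hyperbolic, uniformly attracting fixed point with Lipschitz dependence on $\theta$. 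Under these conditions Tikhonov gives $\sup_{t\in[0,T]}\|\Theta_\epsilon(t)-\theta^*(t)\|\to 0$, which is the claim after undoing the timescale change; note that the initial boundary layer perturbs only the fast variable $W_\epsilon$, while $\Theta_\epsilon$ drifts by $O(\epsilon)$ across it, so uniform convergence of the slow component holds on the whole interval, including $t=0$.

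The main obstacle is the consistency of the quasi-steady-state map. In the textbook version of Tikhonov the fast variable relaxes to a function of the slow variable alone, whereas here $\Omega$ also depends on the initial condition $w_0$. One must therefore argue that, after the initial $O(\epsilon)$ layer in which $\theta$ is essentially frozen at $\theta_0$ while $w$ relaxes from $w_0$ to $\Omega(\theta_0,w_0)$, the subsequent slow drift of $\theta$ keeps $w$ on the \emph{same} continuous equilibrium branch, so that tracking $\Omega(\theta,w_0)$ with the original $w_0$ is self-consistent. This is delicate precisely because $\phi=\Psi(\theta,\cdot)$ is not a gradient field, so the frozen flow may have several basins and even non-convergent trajectories (as flagged after \autoref{prop:sparse_weights_frozen}); the technical assumptions are what guarantee a single smooth attracting branch and thereby close the argument.
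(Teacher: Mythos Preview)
Your high-level identification of the fast--slow structure, the boundary-layer equation, and the reduced dynamics is correct, and invoking a Tikhonov-type result is a perfectly natural way to prove the statement. However, the paper does \emph{not} appeal to the abstract Tikhonov theorem; instead it runs a direct Gronwall argument that is line-by-line symmetric to its proof of \autoref{prop:recover_bilevel}. The actual ``technical assumption'' is not the local hyperbolicity/constant-support package you conjecture, but a single \emph{global} monotonicity inequality,
\[
\langle P(w)\Psi(\theta,w),\, w-\Omega(\theta,w_0)\rangle \;\ge\; \nu\,\|w-\Omega(\theta,w_0)\|^2,
\]
together with Lipschitzness of $\theta\mapsto\Omega(\theta,w_0)$. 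With this, the paper sets $\phi_1(t)=\tfrac12\|w(t)-\Omega(\theta(t),w_0)\|^2$, derives $\dot\phi_1\le -2\beta\nu\phi_1+\alpha K\sqrt{\phi_1}$, integrates it in closed form (their \autoref{app:lemma:diffeq}), and then bounds $\phi_2(t)=\tfrac12\|\theta(t/\alpha)-\theta^*(t)\|^2$ by a second Gronwall step using Lipschitzness of $\nabla G$ and of $\Omega$.

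The trade-off is clear. Your Tikhonov route is conceptually cleaner and in principle only needs \emph{local} uniform exponential attractivity of the branch $\theta\mapsto\Omega(\theta,w_0)$; the price is that you must manage the branch-selection issue you yourself flag (dependence of $\Omega$ on $w_0$, constancy of the support along the slow motion), and you get no explicit rate. The paper's route is more elementary and delivers quantitative bounds, but it purchases this by assuming the strong global Lyapunov inequality above, which is exactly the by-hand substitute for Tikhonov's stability hypothesis and simultaneously sidesteps the branch-consistency problem by building $w_0$ into the inequality itself.
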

As a consequence, the parameters $\theta$ track the gradient flow of $G$ obtained with the sparse weights $\Omega(\theta, w_0)$.
This leads to sub-optimal parameters, which are estimated using only a few training samples, and explains the behavior observed in \autoref{fig:illustration}.
Note that our theory only works in the regime where $\alpha \gg \beta$ (\autoref{prop:recover_bilevel}) or $\alpha \ll \beta$ (\autoref{prop:sparse_dynamics}), the behavior of the warm-started bilevel in practice is therefore interpolating between these two regimes.
However, in practice, we observe that the warm-started bilevel methods are attracted to sparse solutions, hinting at the fact that \autoref{prop:sparse_dynamics} might better describe reality than \autoref{prop:recover_bilevel}.
\section{Experiments}
\label{sec:expe}
All experiments are run using the Jax framework~\cite{jax2018github} on CPUs.
\subsection{The role of mirror descent}
We place ourselves in the same toy setup with a mixture of two Gaussians that is described in \autoref{subsec:toy_expe}. We take $n = 500$ and $m = 100$ in dimension $2$. 
This experiment aims to understand if mirror descent is critical to observing the behavior described in the paper.
We use another method to enforce the simplex constraint, namely, we introduce the following inner function: $G(\theta, \lambda) = \sum_{i=1}^n \frac{\sigma(\lambda_i)}{\sum_j \sigma(\lambda_j)} \ell(\theta;x_i)$, where $\sigma$ is the sigmoid function.
Thus, we take the vector $\lambda$ as outer parameters, which defines the weights as $w_i = \frac{\sigma(\lambda_i)}{\sum_j \sigma(\lambda_j)}$ and we use gradient descent on the $\lambda$ instead of mirror descent on $w$ to solve the bilevel problem. We use $F(\theta) =\frac1m\sum_{j=1}^m\ell'(\theta;x_j')$ as outer function. 
We use the same algorithm as \autoref{algo:mirrordescent} but without the mirror step since the $\lambda$'s are unconstrained.
\begin{figure}[t]
\centering
\includegraphics[width=.7\columnwidth]{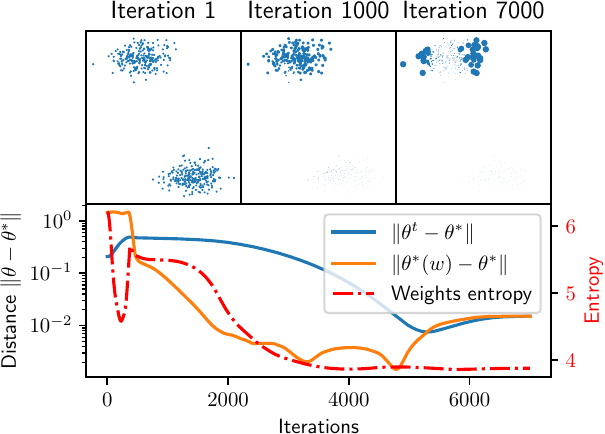}
\caption{Convergence of \autoref{algo:scalablemirrordescent} on a toy mixture problem as described in \autoref{subsec:toy_expe}.
We replace mirror descent by reparameterizing the weights as the softmax of new parameters.
\textbf{Top:} Weights at different iterations.
\textbf{Bottom:} training dynamics.}
\label{fig:gmm_softmax}
\end{figure}
\autoref{fig:gmm_softmax} displays the results.
In blue, we display the error between the parameters and the target $\theta^*$.
In orange, we display the error between the parameters found by minimizing the inner function with the current weights.
Finally, the red curve tracks the entropy of the weights defined as $\mathrm{entropy}=-\sum_{i=1}^n w_i\log(w_i)$.
We use entropy as a proxy for sparsity: entropy is maximized when the weights are uniform and minimized when all weights but one are $0$.
Entropy decreases during training, as suggested by our theory.
Between iterations 1000 and 5000, the weight distribution is not yet sparse and correctly identifies the good cluster; hence minimizing the train loss with those weights leads to good results: the orange curve is low. 
However, because of the warm started dynamics, the parameters take some time to catch up, eventually converging only when the weights are already sparse, leading to a large error.
Here, replacing mirror descent with reparameterization leads to the same behavior described in the paper.
\subsection{Hyper data-cleaning}
\begin{figure}[t]
\centering
\includegraphics[width=.49\columnwidth]{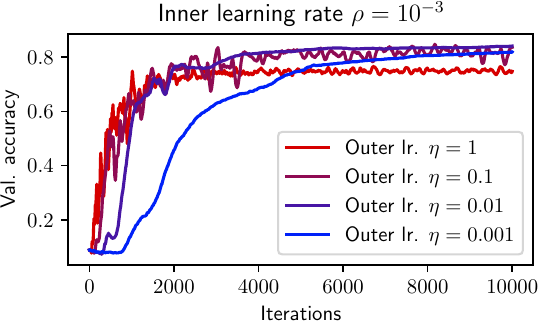}
\includegraphics[width=.49\columnwidth]{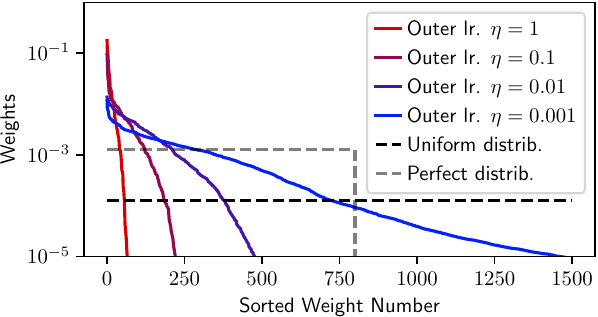}
\caption{Applying SOBA, a scalable warm-start bilevel method, for datacleaning on the MNIST dataset, with a fixed inner learning rate and different outer learning rates.
\textbf{Top:} training curves. 
\textbf{Bottom:} Weights after training, sorted in descending order, zooming in on the first $1500$ weights out of $8000$.
}
\label{fig:mnist_train_curves}
\end{figure}
We conduct a hyper data-cleaning experiment in the spirit of~\cite{franceschi2017forward}.
It is a classification task on the MNIST dataset~\cite{lecun2010mnist}, where the training and testing set consists of pairs of images of size $28\times 28$ and labels in $\{0, 9\}$.
The gist of this experiment is that the training set is \emph{corrupted}: with a probability of corruption $p_c=0.9$, each training sample's label is replaced by a random, \emph{different} label; corrupted samples always have an incorrect label.
The testing set is uncorrupted.
We take $n=8K$ training samples; hence we only have $800$ clean training samples hidden in the training set.

We use a linear model and a cross-entropy loss, with a small $\ell_2$ regularization of $10^{-2}$ on the training loss.
We are, therefore, in the strongly convex setting described in this paper: for any weight $w$, the inner problem has one and exactly one set of optimal parameters.
However, in this case, \autoref{algo:mirrordescent} is not implementable since we do not have a closed-form solution to the regularized multinomial logistic regression.

The linear system resolution in \autoref{algo:scalablemirrordescent} is also impractical; we, therefore, use the scalable algorithm SOBA~\cite{dagreou2022framework}, which has an additional variable that tracks the solution to the linear system updated using Hessian-vector products.
We first run the algorithm with a fixed small inner learning rate $\rho = 10^{-3}$ and several outer learning rates $\eta$. \autoref{fig:mnist_train_curves} displays the results.
The validation loss is computed on samples that are not part of the test set.
When the outer learning rate $\eta$ is too high (red curves), as predicted by our theory, the weights go to very sparse solutions, leading to sub-optimal performance.
When the outer learning rate $\eta$ is too small, the system converges to a good solution, but slowly (the bluest curve has not yet converged).
Overall, the range of learning rates where the algorithm converges quickly to a good solution is very narrow.

Then, we run take different choices of inner learning rate $\rho$ and outer learning rate $\eta$.
We cover a large range of learning rate ratios $r=\eta/\rho$ while ensuring the algorithm's convergence.
Hence, for a fixed target ratio $r$, we pick $\eta$ and $\rho$ so that neither go over a prescribed value $\eta_{\max}=1$ and $\rho_{\max}=10^{-2}$, by taking $\eta = \min(r  \rho_{\max}, \eta_{\max})$ and $\rho = \eta / r$.
The algorithm always converges in this setup.
We perform $12K$ iterations of the algorithm and then compute two metrics: validation accuracy and entropy of the final weights.
As baselines for entropy, we compute the entropy of the \emph{uniform distribution}, equal to $\log(n)$, and of the \emph{perfect distribution}, which would put a weight of $0$ on all corrupted data points, and uniform weight elsewhere, equal to $(1-p_c)\log(n)$.
\begin{figure}[t]
    \centering
\includegraphics[width=.7\columnwidth]{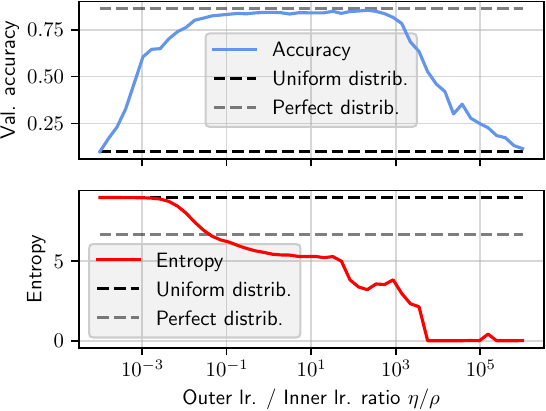}
\caption{Output of SOBA on the datacleaning task for a wide range of learning rate ratios. Entropy is a proxy for weight's sparsity.}
\label{fig:mnist_linear}
\end{figure}
\autoref{fig:mnist_linear} displays the results.
We see that when the outer learning rate $\eta$ is too small, nothing happens because the weights have not yet converged. 
The corresponding accuracy is around $10\%$: training the model on corrupted data completely fails.
Then, when the ratio gets larger, reweighting starts working: for ratios between $10^{-1}$ to $10^2$, the weights learned are not sparse and correctly identify the correct data.
This leads to an accuracy close to the accuracy of the model trained on the perfect distribution, i.e., on the $800$ clean train samples.
However, the time it takes to convergence is roughly linear with the ratio: taking a ratio of $10^{-1}$ leads to training about $1000$ times slower than with $10^{2}$.
Finally, when the ratio is much higher, we arrive at the region predicted by our theory (\autoref{prop:sparse_dynamics}): weights become extremely sparse, as highlighted by the entropy curve, and accuracy decreases. 
We even reach a point where the entropy goes to $0$, i.e., only one weight is non-zero.
The corresponding accuracy is not catastrophic thanks to warm-starting; it is much better than that of a model trained on that single sample: the model has had time to learn something in the period where the weights were non-zero.
This observation is reminiscent of~\cite{vicol2022implicit}, which also mention that the weights trajectory impacts the final parameters.
\section*{Discussion \& Conclusion}
In this work, we have illustrated a challenge for data reweighting with bilevel optimization: bilevel methods must use warm-starting to be practical, but warm-starting induces sparse data weights, leading to sub-optimal solutions.
To remedy the situation, a small outer learning rate should therefore be used, which might, in turn, lead to slow convergence.
Classical bilevel optimization theory~\cite{arbel2021amortized} demonstrates the convergence of warm-started bilevel optimization to the solutions of the true bilevel problem.
This may seem paradoxical at first and in contradiction with our results.
Two explanations lift the paradox: i)\cite{arbel2021amortized} require the ratio $\frac{\alpha}\beta$ to be smaller than some intractable constant of the problem, hence not explaining the dynamics of the system in the setting where $\alpha$ is not much smaller than $\beta$, which is the gist of this paper, ii) convergence results in bilevel optimization are always obtained as non-convex results, only proving that the gradient of $h$ goes to $0$. 
In fact, for the data reweighting problem, several stationary points of $h$ are sparse (see \autoref{prop:sparse_weights_frozen}).
Hence, our results on the sparsity of the resulting solution can be seen as implicit bias results, where the ODE converges to different solutions on the manifold of stationary points.
Finally, our results are orthogonal to works considering the implicit bias of bilevel optimization~\cite{arbel2021amortized,vicol2022implicit}
These results are based on over-parameterization, implying that the inner problem is not strongly convex and has multiple minimizers, while our results do not require such a structure.
\bibliographystyle{abbrv}
\bibliography{main}

\appendix

\section{Proofs}
\subsection{Proof of \autoref{prop:measure_prop}}
Let $\tilde{\theta}$ the global minimizer of the test loss $\theta\mapsto\int\ell(\theta; x)d\nu(x)$.
By definition, for all $\omega$, we have $\int\ell(\theta^*(\omega); x)d\nu(x) \geq \int\ell(\tilde{\theta}; x)d\nu(x)$.
Furthermore, for $\omega^* = d\nu/d\mu$, we have that the inner loss becomes $\int\ell(\theta; x)\omega^*(x)d\mu(x) =  \int\ell(\theta; x)d\nu(x)$, the outer loss. 
Minimizing it leads to $\theta^*(\omega^*) = \tilde{\theta}$, which is, therefore, the global minimizer of the bilevel problem.
\subsection{Proof of \autoref{prop:discretization_ode}}
This is a standard discretization argument from ODE theory.
The gist of the proof is the following result. For a fixed $\alpha, \beta >0$, define
$$
\mathcal{F}(\theta, w, \tau) = \begin{bmatrix}
    -\alpha \nabla G(\theta, w) \\
    \frac1\tau \left(\frac{w \odot \exp(-\tau \beta \Psi(\theta, w))}{\|w \odot \exp(-\tau \beta \Psi(\theta, w))\|_1} - w\right)
\end{bmatrix},
$$
so that the iterations of \autoref{algo:scalablemirrordescent} can be compactly rewritten as $(\theta^{k+1}, w^{k+1}) =(\theta^k, w^k) + \tau  \mathcal{F}(\theta^k, w^k, \tau)$.

We find that 

$$
\lim_{\tau\to0}\mathcal{F}(\theta, w, \tau) = \begin{bmatrix}
    -\alpha \nabla G(\theta, w) \\
    -\beta P(w)\Psi(\theta, w)
\end{bmatrix}, 
$$
hence \autoref{algo:scalablemirrordescent} is indeed a discretization of the ODE~\eqref{eq:bilevel_flow}.
The result follows from classical ODE discretization theory (e.g.~\cite{chicone2006ordinary}).
\subsection{Proof of \autoref{prop:stat}}
The stationary points of the ODE~\eqref{eq:mirror_flow} are the $w\in\Delta_n$ such that $\Phi(w)=0$.
We see that we need to solve the equation $P(w)u=0$ for $u\in \mathbb{R}^n$. 
The $i$-th coordinate of $P(w)u$ is $w_i(u_i - \langle w, u\rangle)$.
Hence, this cancels for all $i$ if and only if $w_i=0$ or $u_i =   \langle w, u\rangle$.
This means that for the $i\in\mathrm{Supp}(w)$, $u_i$ is constant, i.e., $u\mid_{\mathrm{Supp}(w)}$ is proportional to $\mathbb{1}_l$. This gives the adverstized result.

\subsection{Proof of \autoref{prop:stable_flow}}
A stationary point is stable if and only if the operator 
$$
D\Phi(w) = \begin{bmatrix}
    P(\tilde{w})\tilde{J} & (*) \\
    0 & \mathrm{diag}(\phi_i - \langle w, \phi\rangle)
\end{bmatrix}
$$
has positive eigenvalues.
This matrix is block diagonal, hence, the condition is that both $P(\tilde{w})\tilde{J}$ and  $\mathrm{diag}(\phi_i - \langle w, \phi\rangle)$ have positive eigenvalues. 
The condition on $\mathrm{diag}(\phi_i - \langle w, \phi\rangle)$ can simply be rewritten as $\phi_i >  \langle w, \phi\rangle$, finishing the proof.

\subsection{Proof of \autoref{prop:flow_speed}}
The proof is found in~\cite{chicone2006ordinary}, corollary 4.23.

\subsection{Proof of \autoref{prop:constant_frozen_flow}}
In this case, the flow becomes the ODE
$$\dot{w} = -w\odot\phi + w \langle w, \phi\rangle$$
The solution to this ODE is more obvious by looking at the corresponding discrete mirror descent.
In the discrete mirror descente case, it corresponds to the iterates $w^{k+1}=\frac{w^k \exp(-\tau \phi)}{\|w^k \exp(-\tau \phi)\|_1}$, hence a simple recursion shows that $w^k = \frac{w^0 \exp(-\tau k \phi)}{\|w^0 \exp(-\tau k \phi)\|_1}$.
We then infer the solution in continuous time, it is simply
$$
w(t) = \frac{w^0 \odot \exp(-t \phi)}{\|w^0 \odot \exp(-t \phi)\|_1}
$$
We see that is has the adverstized behavior: as $t$ goes to infinity, the i-th coefficient of $\frac{w^0 \odot \exp(-t \phi)}{\|w^0 \odot \exp(-t \phi)\|_1}$ goes to $0$ if $i$ is not in the argmax of $\phi$, and is proportionnal to $w_0$ otherwise.
We also note that this result also holds in the discrete mirror descent case, here the fact that we simplify things to an ODE plays not role in the behavior.
\subsection{Proof of \autoref{prop:set_proba}}
In the case where $l\leq p$, then with probability one, $Z$ is surjective, i.e. the equation $Zx = y$ has a solution $x$ for all $y$.
Therefore, $Z$ is in $\mathcal{I}^p_l$ with probability one.
One the contrary, if $l > p$, then $\mathrm{range}(Z)$ is a random subspace of dimension $p$, hence the probability that $\mathbb{1}_l$ or $\mathbb{0}_l$ are in this space is $0$ -- note that for $\mathbb{0}_l$ we take the subspace range of $Z$ without the $0$ term.

\subsection{Proof of \autoref{prop:sparse_weights_frozen}}
This is simply \autoref{prop:stat} specialized to the field of the form $\phi(w) =\Gamma g(w)$.

Indeed, \autoref{prop:stat} shows that a stationary point must satisfy that $\Gamma g(w)\mid_{\mathrm{Supp}(w)}$ is proportional to $\mathbb{1}_l$, hence $\Gamma\mid_{\mathrm{Supp}(w)} g(w) = \gamma \mathbb{1}_l$ for some $\gamma$. Therefore, $\Gamma\mid_{\mathrm{Supp}(w)}$ is in $\mathcal{I}_l^p$: depending on whether $\gamma=0$ or not, we must have a nonzero solution to $\Gamma\mid_{\mathrm{Supp}(w)}x = \mathbb{0}_l$ or $\Gamma\mid_{\mathrm{Supp}(w)}x = \mathbb{1}_l$.

\subsection{Proof of \autoref{prop:recover_bilevel}}

We will use many times the fact that the simplex $\Delta_n$ is a compact set, hence any continuous function over $\Delta_n$ is bounded. 
We will use the constant $d_\theta = \sup_{w\in\Delta_n} \|\theta^*(w)\|$, which is finite thanks to the compacity of $\Delta_n$.

We first show a very useful lemma: the trajectory of $\theta$ remains bounded.
\begin{lemma}
\label{app:lemma:bounded}
We let $\kappa = \frac{L_G}{\mu}$ the conditioning of $G$.
    If $\|\theta(0)\|\leq R:=\frac{\sqrt{2}(2 +\kappa) + \sqrt{2(2+\kappa)^2 + 8\kappa}}{4} d_\theta$ then for all $t,$ we have $\|\theta(t)\|\leq R$.
\end{lemma}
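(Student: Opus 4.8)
The plan is to prove that the closed ball $\{\theta:\|\theta\|\le R\}$ is forward-invariant for the $\theta$-component of the flow \eqref{eq:bilevel_flow}; since $\|\theta(0)\|\le R$, this gives the lemma at once. The first thing I would observe is that the $w$-equation plays no direct role: the only feature of $w(t)$ I use is that it stays in the simplex $\Delta_n$, so that every estimate below can be taken uniformly over $w\in\Delta_n$ and independently of $\alpha>0$. I would then work with the Lyapunov function $V(t)=\tfrac12\|\theta(t)\|^2$, whose derivative along the flow is
\[
\dot V(t)=\langle\theta,\dot\theta\rangle=-\alpha\,\langle\theta,\nabla G(\theta,w)\rangle .
\]
Because $\alpha>0$, it suffices to show that $\langle\theta,\nabla G(\theta,w)\rangle\ge 0$ on the sphere $\|\theta\|=R$, uniformly in $w\in\Delta_n$: this makes $V$ non-increasing whenever the trajectory touches the boundary.

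The heart of the argument is a uniform lower bound on $\langle\theta,\nabla G(\theta,w)\rangle$. I would expand around the unique inner minimizer $\theta^*(w)$, using $\nabla G(\theta^*(w),w)=0$. By \autoref{ass:strong_convexity}, $G(\cdot,w)=\sum_i w_i\ell(\cdot;x_i)$ is $\mu$-strongly convex for every $w\in\Delta_n$ (as a convex combination of $\mu$-strongly convex functions), so strong monotonicity gives $\langle\theta-\theta^*(w),\nabla G(\theta,w)\rangle\ge\mu\|\theta-\theta^*(w)\|^2$, while $L_G$-smoothness of $G(\cdot,w)$ gives $\|\nabla G(\theta,w)\|\le L_G\|\theta-\theta^*(w)\|$. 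Writing $\langle\theta,\nabla G\rangle=\langle\theta-\theta^*(w),\nabla G\rangle+\langle\theta^*(w),\nabla G\rangle$, bounding the second term by Cauchy–Schwarz, and invoking the compactness bound $\|\theta^*(w)\|\le d_\theta$, I obtain
\[
\langle\theta,\nabla G(\theta,w)\rangle\ \ge\ \mu\,\|\theta-\theta^*(w)\|^2-L_G\,d_\theta\,\|\theta-\theta^*(w)\| .
\]
Both terms depend only on $\|\theta-\theta^*(w)\|$, which I would relate to $s:=\|\theta\|$ through the triangle inequality $\|\theta\|-d_\theta\le\|\theta-\theta^*(w)\|\le\|\theta\|+d_\theta$, combined with a Young inequality of suitably chosen weight (lower-bounding the positive quadratic term and controlling the subtracted linear term). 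This turns the right-hand side into a quadratic in $s$ whose coefficients involve only $\mu$, $L_G=\kappa\mu$, and $d_\theta$.

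The final step is to note that such a quadratic is nonnegative exactly when $s$ is at least its positive root, and that choosing the Young weight so as to homogenize the quadratic yields precisely the stated threshold $R=\frac{\sqrt2(2+\kappa)+\sqrt{2(2+\kappa)^2+8\kappa}}{4}\,d_\theta$. Hence $\langle\theta,\nabla G\rangle\ge0$, i.e.\ $\dot V\le 0$, whenever $\|\theta\|=R$. Forward-invariance then follows from a standard barrier/first-exit-time argument: if the ball were exited, there would be a first time $t_0$ with $\|\theta(t_0)\|=R$ and $\frac{d}{dt}\|\theta\|^2\big|_{t_0}>0$, contradicting $\dot V(t_0)\le 0$ (this is Nagumo's viability criterion for the convex set $\{\|\theta\|\le R\}$).

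I expect the conceptual content to be easy — the whole argument is forward-invariance of a ball and never sees the $w$-dynamics — so the real work, and the main obstacle, is the bookkeeping in the quadratic estimate. In particular, one must (i) make every constant uniform over the compact simplex and independent of $\alpha$, and (ii) choose the precise splitting in the Young/triangle step that reproduces the exact coefficients of the stated root; the factor $\sqrt2$ appearing in $R$ is the signature of a Young inequality with a non-unit weight rather than a crude triangle-inequality bound. Any looser choice still yields forward-invariance of some ball, but only the calibrated split gives the explicit $R$ claimed in the lemma.
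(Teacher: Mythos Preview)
Your approach is essentially the paper's: both differentiate $\tfrac12\|\theta\|^2$, split $\langle\theta,\nabla G\rangle$ around $\theta^*(w)$, apply $\mu$-strong convexity and $L_G$-smoothness together with $\|\theta^*(w)\|\le d_\theta$ to obtain a quadratic lower bound in $\|\theta\|$, and take $R$ as the positive root. The only cosmetic differences are that the paper closes with a Gronwall/comparison-ODE argument rather than Nagumo's criterion, and the $\sqrt2$ in the stated $R$ does not come from a weighted Young inequality but simply from the paper's change of variable $\|\theta\|=\sqrt{2r}$ when rewriting the bound in terms of $r=\tfrac12\|\theta\|^2$.
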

\begin{proof}
    We let $r(t)=\frac12\|\theta(t)\|^2$. We find
    \begin{align}
        \frac{dr}{dt} &= \langle \dot{\theta}, \theta\rangle\\
        &= -\alpha\langle \nabla G(\theta, w), \theta\rangle\\
        &= -\alpha\left(\langle \nabla G(\theta, w), \theta - \theta^*(w)\rangle  +\langle \nabla G(\theta, w), \theta^*(w)\rangle   \right)\\
        &\leq \alpha\left(-\mu\|\theta - \theta^*(w)\|^2  + \|G(\theta, w)\|\|\theta^*(w)\|\right)\\
        &\leq \alpha(-\mu (\|\theta\|^2 - 2 \|\theta\|d_\theta) + L_Gd_\theta(d_\theta + \|\theta\|))
    \end{align}
where we have used the $\mu-$strong convexity and $L_G$-smoothness of $G$, the inequality $\|\theta - \theta^*\|^2 \geq \|\theta\|^2 - 2\|\theta\|\|\theta^*\|$ and the triangular inequality.
Hence, we get the differential inequality
$$
\frac{dr}{dt} \leq -\alpha(2\mu r^2 - \sqrt{2}d_\theta (2\mu +L_G)r - L_G d_\theta^2):=z(r)
$$
The advertised radius $R$ is exactly the zero of the right-hand side $z(r)$.
By Gronwall's lemma, we have that $r(t)\leq u(t)$ where $u(t)$ is the solution to the ODE $\dot u = z(u)$ with $u(0) = r(0)$. 
Since $R$ is a zero of $z$, the trajectory of $u$ stays in $[0, R]$ following Picard-Lindelof theorem, hence that of $r$ as well, proving the result.

\end{proof}
Note that the radius here does not depend on either $\alpha$ and $\beta$: the dynamics in $\theta$ and $w$ both remain in a compact set that does not depend on the steps.
We will actively use this result throughout the rest of the analysis because it allows us to bound any quantity that depends on $\theta$ and $w$. 
This lemma also immediately implies that the ODE has a solution for all time horizons if $\|\theta(0)\|$ is small enough regardless of $\alpha$ and $\beta$.

To prove the main result, we first control the distance from $\theta(t)$ to $\theta^*(w(t))$. 
We let $\phi_1(t)= \frac12 \|\theta(t) -\theta^*(w(t))\|^2$.
We find
\begin{align}
    \frac{d\phi_1}{dt}(t) &= -\alpha \langle\nabla G(\theta, w), \theta - \theta^*\rangle - \beta \langle P(w)\Psi(\theta, w)\frac{d\theta^*(w)}{d w}, \theta - \theta^*\rangle\\
    &\leq -2\alpha\mu \phi_1(t) + \beta \sqrt{2}K \|\Psi(\theta, w)\|\phi_1^{\frac12},
\end{align}
where $K>0$ is a constant. The first inequality comes from the strong convexity of $G$, and the second one is Cauchy-Schwarz combined with the boundedness of $P(w)$ and $\frac{d\theta^*(w)}{d w}$.
We now use \autoref{app:lemma:bounded} to get a crude bound on $\|\Psi\|$ :$\|\Psi\|\leq a_1$.
Hence, we obtain the differential inequation:
$$
\frac{d\phi_1}{dt}(t)\leq -2\alpha \mu \phi_1 + \sqrt{2}\beta Ka_1 \phi_1^{\frac12}.
$$
This equation is integrated in closed form using the following lemma:
\begin{lemma}
\label{app:lemma:diffeq}
    Let $a, b>0$. The solution to the differential equation $x' = -ax + b\sqrt{x}$ starting from $x(0)\in[0, (b/ a)^2]$ is $x(t) = \left(
    \frac b a(1-\exp(-\frac{at}2)) + \sqrt{x(0)}\exp(-\frac{at}2)\right)^2$.
\end{lemma}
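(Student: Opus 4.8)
The plan is to linearize the equation through the substitution $y = \sqrt{x}$, which turns the nonlinear $\sqrt{x}$ term into a constant and reduces the problem to a first-order linear ODE that can be integrated explicitly. Formally, on any interval where $x(t) > 0$ the function $y = \sqrt{x}$ is differentiable with $x' = 2yy'$, so the equation $x' = -ax + b\sqrt{x}$ becomes $2yy' = -ay^2 + by$, and dividing by $2y$ gives the linear ODE $y' = -\frac{a}{2}y + \frac{b}{2}$.

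Next I would solve this linear equation. Its unique equilibrium is $y = b/a$, and the general solution is $y(t) = \frac{b}{a} + \big(y(0) - \frac{b}{a}\big)e^{-at/2}$. With $y(0) = \sqrt{x(0)}$ this reads $y(t) = \frac{b}{a}(1 - e^{-at/2}) + \sqrt{x(0)}\,e^{-at/2}$, and squaring gives exactly the claimed expression for $x(t) = y(t)^2$.

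The only delicate point — and the one I would treat carefully — is the behaviour at $x = 0$, where the right-hand side $-ax + b\sqrt{x}$ fails to be Lipschitz, so the substitution above is not a priori justified and uniqueness is not automatic. To bypass this, rather than relying on the substitution I would verify directly that the closed-form candidate $x(t) = y(t)^2$ solves the initial value problem. Differentiating, $x'(t) = 2y(t)y'(t) = -a\,y(t)\big(y(t) - \tfrac{b}{a}\big) = -a\,x(t) + b\,y(t)$, so it remains only to check that $y(t) = \sqrt{x(t)}$, i.e. that $y(t) \ge 0$, for all $t$. This is where the hypothesis $x(0) \in [0, (b/a)^2]$ enters: it gives $\sqrt{x(0)} \le b/a$, so the coefficient $\sqrt{x(0)} - b/a$ is nonpositive and $y(t) = \frac{b}{a} + \big(\sqrt{x(0)} - \frac{b}{a}\big)e^{-at/2}$ increases monotonically from $\sqrt{x(0)}$ toward $b/a$, staying in $[\,0, b/a\,]$. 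Hence $y(t) \ge 0$, so $y(t) = \sqrt{x(t)}$ and $b\,y(t) = b\sqrt{x(t)}$, confirming $x' = -ax + b\sqrt{x}$; together with $x(0) = (\sqrt{x(0)})^2$ this establishes the formula. I expect this nonnegativity bookkeeping at the boundary to be the main (and essentially only) obstacle, since the integration itself is routine.
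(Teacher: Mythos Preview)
The paper states this lemma without proof, so there is no argument to compare against. Your approach via the substitution $y=\sqrt{x}$ is the standard one and is correct: it reduces the Bernoulli-type equation to a first-order linear ODE, and your direct verification that the candidate $x(t)=y(t)^2$ satisfies the original equation, together with the check that $y(t)\ge 0$ under the hypothesis $\sqrt{x(0)}\le b/a$, cleanly sidesteps the non-Lipschitz issue at $x=0$.

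One small caveat you might add for completeness: at the endpoint $x(0)=0$ the initial value problem is genuinely non-unique (for instance $x\equiv 0$ is also a solution), so the lemma's phrasing ``the solution'' is slightly loose there; for $x(0)>0$ your formula shows the trajectory stays strictly positive, and uniqueness then follows from local Lipschitz continuity of $-ax+b\sqrt{x}$ on $(0,\infty)$. For the paper's purposes this is harmless, since the lemma is only invoked through a Gronwall-type comparison to obtain the stated formula as an upper bound on $\phi_1$.
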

Hence, we get 
\begin{equation}
    \label{app:eq:ineq_distance_theta}
    \phi_1(t)\leq\left(\frac{\beta Ka_1}{\sqrt{2}\alpha \mu}(1-\exp(-\alpha \mu t)) + \sqrt{\phi_1(0)}\exp(-\alpha \mu t)\right)^2\leq \left(\frac{\beta Ka_1}{\sqrt{2}\alpha \mu} + \sqrt{\phi_1(0)}\exp(-\alpha \mu t)\right)^2
\end{equation}
demonstrating the last inequality in \autoref{prop:recover_bilevel}, as the first part of the previous inequality goes to $0$ when $\beta$ goes to $0$.

Next, we consider $\phi_2(t) = \frac12\|w(\frac t \beta) -w^*(t)\|^2$, where $w^*(t)$ is the solution to the ODE $\dot w = -P(w)\nabla h(w)$. We find, where for short $\theta = \theta(\frac t \beta)$, and $w = w(\frac t \beta)$:
\begin{align}
    \frac{d\phi_2}{dt}&=- \langle P(w)\Psi(\theta, w) - P(w^*)\Psi(\theta^*(w^*), w^*), w - w^*\rangle \\
    &\leq \|P(w)\Psi(\theta, w) - P(w^*)\Psi(\theta^*(w^*), w^*)\|\|w - w^*\|\\
    &\leq L_P\left(\|w - w^*\| + \|\theta - \theta^*(w^*)\| \right)\|w - w^*\|,
\end{align}
where $L_P$ is the Lipschitz constant of the map $(w, \theta)\mapsto P(w)\Psi(\theta, w)$, and we have used the Cauchy-Schwarz inequality to control the scalar product.

The term $\|\theta - \theta^*(w^*)\|$ is controled with the triangular inequality by doing
\begin{align}
    \|\theta - \theta^*(w^*)\| &\leq \|\theta - \theta^*(w)\| + \|\theta^*(w^*) - \theta^*(w^*)\|
\end{align}
The first term is $\phi_1(\frac t\beta)$ which is controlled by \autoref{app:eq:ineq_distance_theta}, while the second term is controled by Lipschitzness of $\theta^*$~\cite{ghadimi2018approximation}, yielding  
$$
\|\theta - \theta^*(w^*)\|\leq \frac{\beta Ka_1}{\sqrt{2}\alpha \mu} + \sqrt{\phi_1(0)}\exp(-\frac\alpha\beta \mu t) + L_\theta \|w - w^*\|$$

Overall, we get
$$
\frac{d\phi_2}{dt} \leq L_P(3+L_\theta)\phi_2 + L_P(\phi_1(0)\exp(-2\frac\alpha\beta \mu t) +(\frac{\beta Ka_1}{\sqrt{2}\alpha \mu})^2)
$$
where we have used the inequality $\|w - w^*\|(\frac{\beta Ka_1}{\sqrt{2}\alpha \mu} + \sqrt{\phi_1(0)}\exp(-\frac\alpha\beta \mu t) )\leq \phi_2 \phi_1(0)\exp(-2\frac\alpha\beta \mu t) +(\frac{\beta Ka_1}{\sqrt{2}\alpha \mu})^2$.
Intuitively, this shows that $\phi_2$ is small because as $\beta$ goes to $0$ we get to the inequality $\frac{d\phi_2}{dt}\leq L_P(3+L_\theta)\phi_2$, which implies that $\phi_2=0$ since $\phi_2(0)=0$. 
We use Gronwall's lemma and the fact that $\phi_2(0) = 0$ to prove this intuition. 
This gives

$$
\phi_2(t)\leq \frac{L_P\phi_1(0)}{2\frac{\alpha}{\beta} \mu+ L_P(3+L_\theta)}(\exp(L_P(3+L_\theta)t) - \exp(-2\frac\alpha\beta \mu t)) + \frac{(\beta Ka_1)^2}{2\alpha^2\mu^2( L_P(3+L_\theta))}(\exp( L_P(3+L_\theta)t) - 1)
$$
which indeed goes to $0$ as $\beta$ goes to $0$.

\subsection{Proof of \autoref{prop:sparse_dynamics}}
We need to be able to control the speed at which the mirror flow converges to the solution $\Omega(\theta, w^0),$ to do so we posit a Lyapunov inequality:
\begin{assumption}
    There exists $\nu>0$ such that for all $w, \theta$, it holds
    $$
    \langle P(w)\Psi(\theta, w), w - \Omega(\theta, w^0)\rangle \geq \nu \|w - \Omega(\theta, w^0)\|^2$$
\end{assumption}
This assumption implies, in particular, that the flow $\dot w = - P(w)\Psi(\theta, w)$ with fixed $\theta$ goes to $\Omega(\theta, w^0)$ at an exponential speed.
We also assume that $\Omega$ is $L_\Omega$- Lipschitz.

The analysis is then extremely similar to the previous one: we let $\phi_1(t)= \frac12\|w(t) - \Omega(\theta, w^0)\|^2$ and find
\begin{align}
    \frac{d\phi_1}{dt} &= -\beta  \langle P(w)\Psi(\theta, w), w - \Omega(\theta, w^0)\rangle + \alpha \langle \frac{d\Omega}{d\theta}\nabla G(\theta, w), w - \Omega(\theta, w^0)\rangle \\
    &\leq -2\beta \nu\phi_1(t) + \alpha K \sqrt{\phi_1(t)} 
\end{align}
where $K$ is an upper bound on $\|\frac{d\Omega}{d\theta}\nabla G(\theta, w)\|$.
Therefore, \autoref{app:lemma:diffeq} gives
$$
\phi_1(t)\leq \left(\frac{\alpha K}{2\beta\nu} + \sqrt{\phi_1(0)}\exp(-\beta\nu t)\right)^2
$$

Next, we let $\phi_2(t) = \frac12\|\theta(\frac t\alpha) - \theta^*(t)\|^2$ with $\theta^*(t)$ the solution to the ODE $\dot \theta = - \nabla G(\theta, \Omega(\theta, w^0))$.
We get

\begin{align}
    \frac{d\phi_2}{dt}&=- \langle \nabla G(\theta, w) - \nabla G(\theta^*, \Omega(\theta^*, w^0)), \theta - \theta^*\rangle \\
    &\leq \|\nabla G(\theta, w) - \nabla G(\theta^*, \Omega(\theta^*, w^0)) \|\theta - \theta^*\|\\
    &\leq L_G\left(\|\theta- \theta^*\| + \|w - \Omega(\theta^*, w^0)\| \right)\|\theta - \theta^*\|,
\end{align}
The term $\|w - \Omega(\theta^*, w^0)\|$ is controlled by the triangular inequality:
$$
\|w - \Omega(\theta^*, w^0)\| \leq \|w - \Omega(\theta, w^0)\| + \|\Omega(\theta, w^0) - \Omega(\theta^*, w^0)\|\leq \phi_1(\frac t \alpha) + L_{\Omega} \|\theta - \theta^*\|
$$
and the rest of the proof follows in the same way as that of \autoref{prop:recover_bilevel}.
\end{document}